\newtheorem{theorem}{Theorem}
\newtheorem{proposition}{Proposition}
\newtheorem{corollary}{Corollary}
\newtheorem{lemma}{Lemma}
\newtheorem{definition}{Definition}
\newcommand{\cjp}[1]{{\footnotesize\textcolor{orange}{JC: #1}}}
\newcommand{\ql}[1]{{\footnotesize\textcolor{red}{QL: #1}}}
\title{A unified framework for establishing the universal approximation of transformer-type architectures}
\author{%
  Jingpu Cheng\\
      Department of Mathematics \\
   National University of Singapore\\
  \texttt{chengjingpu@u.nus.edu} \\
 \And
  Ting Lin \\
  School of Mathematical Sciences\\
  Peking University \\
  \texttt{lintingsms@pku.edu.cn} \\
  \And
  Zuowei Shen \\
    Department of Mathematics \\
   National University of Singapore\\
  \texttt{matzuows@nus.edu.sg} \\
    \And
  Qianxiao Li \\
  Department of Mathematics \\
 Institute for Functional Intelligent Materials \\
 National University of Singapore \\
  \texttt{qianxiao@nus.edu.sg} \\
}
\begin{document}

\maketitle

\begin{abstract}
    We investigate the universal approximation property (UAP) of transformer-type architectures, providing a unified theoretical framework that extends prior results on residual networks to models incorporating attention mechanisms. Our work identifies token distinguishability as a fundamental requirement for UAP and introduces a general sufficient condition that applies to a broad class of architectures. Leveraging an analyticity assumption on the attention layer, we can significantly simplify the verification of this condition, providing a non-constructive approach in establishing UAP for such architectures.
    We demonstrate the applicability of our framework by proving UAP for transformers with various attention mechanisms, including kernel-based and sparse ones. The corollaries of our results either generalize prior works or establish UAP for architectures not previously covered. Furthermore, our framework offers a principled foundation for designing novel transformer architectures with inherent UAP guarantees, including those with specific functional symmetries. We propose examples to illustrate these insights.
\end{abstract}


\section{Introduction}

Transformers~\cite{vaswani2017attention} are a family of deep learning architectures that have 
achieved remarkable performance in natural language processing~\cite{radford2018improving,radford2019language,brown2020language}, computer vision~\cite{dosovitskiy2020image,carion2020end}, and other fields~\cite{jumper2021highly}. 
Given an input sequence of tokens, a transformer processes it through a deep composition of alternating attention and token-wise feedforward layers.  Besides the original softmax attention~\cite{vaswani2017attention}, a variety of different attention mechanisms have been proposed to enhance performance or computational efficiency, such as kernel-based attention~\cite{choromanski2020rethinking,tsai2019transformer,chen2021skyformer,katharopoulos2020transformers}, sparse attention~\cite{beltagy2020longformer,zaheer2020big,kitaev2020reformer}, and attention with low-rank structures~\cite{wang2020linformer,zhu2021long}.

A natural theoretical question is: \emph{What is the expressive power of these architectures?} 
Previous studies have shown that transformers achieve the universal approximation property (UAP) via architecture-specific constructions, meaning they can approximate any continuous sequence-to-sequence function over compact domains~\cite{yun2020n,zaheer2020big,kajitsukatransformers} in certain measures. 
However, these results heavily rely on explicit, architecture-specific constructions, and a unified theoretical framework of deep transformer-type architectures remains elusive.
In particular, it is highly desirable to derive a 
verifiable condition that guarantees UAP of deep transformer-type architectures,
independent of specific architectural details such as the choice of the attention mechanism.
Such a framework would allow greater flexibility in design without sacrificing expressivity.


Similar concerns have been addressed for fully connected deep residual networks (ResNets) using insights from control theory and dynamical systems~\cite{tabuada2022universal,ruiz2023neural,li2022deep,cheng2025interpolation}. By interpreting ResNets as control systems, recent studies~\cite{li2022deep,cheng2025interpolation} showed that deep ResNets with Lipschitz nonlinear activation functions possess UAP. However, extending this approach to transformers presents a challenge. Unlike ResNets, transformers apply identical feedforward transformations across tokens, without direct inter-token interactions. Hence, the attention mechanism must effectively capture token dependencies and propagate contextual information throughout the network.

To extend the UAP framework from ResNets to transformers,  we model each transformer block through two sequential operations\footnote{Here, we omit the layer normalization  for simplicity of the analysis.}:
\begin{equation}
    \begin{aligned}
    \label{eq:block}
    X_{t+\frac{1}{2}} &= X_t + \operatorname{Atten}(X_t),\\
    X_{t+1} &= X_{t+\frac{1}{2}} + \operatorname{FFN}(X_{t+\frac{1}{2}}),
    \end{aligned}
\end{equation}
where $X_t \in \mathbb{R}^{d\times n}$ represents $n$ tokens each of dimension $d$. The feedforward network ($\operatorname{FFN}$) acts independently on each token, while the attention layer ($\operatorname{Atten}$) explicitly models dependencies across tokens. 
A transformer model is then defined as the composition of such blocks. In many practical architectures, the attention layer is computed by some interactions between tokens, which has a computational complexity up to $\mathcal{O}(n^2d)$, while the feedforward layer has computation complexity of $\mathcal{O}(nd^2)$. This structure reduces computational complexity from $\mathcal{O}(n^2 d^2)$ (for using network with dense layers) to $\mathcal{O}(nd(n+d))$(can be even lower in many variants) by decomposing approximation tasks into simpler token-wise and token-mixing components. Such a decomposition not only enhances computational efficiency but also provides a novel perspective in the context of approximation theory. Therefore, it is of theoretical interest to understand \emph{how the combination of the token-wise and token-mixing operations contributes to the expressive power of models in handling sequential data.}

In this paper, we develop a general framework for the UAP analysis of transformers. Specifically, we provide abstract and verifiable conditions ensuring UAP, independent of specific architectural details. Our key contributions include:

\begin{itemize}
     \item We derive a general sufficient condition for
        transformer models to achieve UAP in the $L^p$ sense over compact sets (\Cref{thm:main}), requiring: (1) feedforward layers satisfying the conditions~\cite{cheng2025interpolation}, as stated in~\Cref{def:nonlinearity}, and (2) attention mechanisms producing distinct context-aware token representations across different inputs.
        Notably, our framework  incorporates potential symmetry under token permutations in transformers, extending the analysis to non-transitive permutation groups beyond
        ~\cite{li2024deep}.

        \item For attention mechanisms that are analytic to their parameters, we reduce UAP verification to a two-sample condition (\Cref{prop:analytic}), simplifying practical validation compared to constructive approaches~\cite{yun2019transformers,yun2020n,zaheer2020big,kajitsukatransformers}.
        Moreover, we show conditions under which transformers with a fixed number of attention layers but arbitrarily many feedforward layers achieve UAP, generalizing the results on the memorization capability of transformers in~\cite{kajitsukatransformers,kim2022provable}.

        \item We apply our general framework to various transformer architectures to demonstrate its generality and applicability, including kernel-based~\cite{vaswani2017attention,choromanski2020rethinking,tsai2019transformer}, sparse~\cite{beltagy2020longformer,zaheer2020big,kitaev2020reformer}, and some other attention mechanisms~\cite{wang2020linformer,chen2021skyformer}. 
        For kernel-based attention (formulated in~\Cref{sec:kernel_atten}), our result (\Cref{thm:kernel}) provides UAP guarantee for many existing architectures in previous works~\cite{vaswani2017attention,choromanski2020rethinking,tsai2019transformer} and also  for new architectures. 
        For sparse attention (e.g., architectures proposed in~\cite{beltagy2020longformer,zaheer2020big,kitaev2020reformer}), our result (\Cref{prop:sparse}) provides a UAP criterion which generalizes beyond the softmax attention and
        is free from technical assumptions on the sparse pattern.


        \item Our theoretical results also enable principled design of UAP-guaranteed architectures. We demonstrate this by proposing new transformer architectures with UAP guarantees, especially for attention mechanisms that preserve specific functional symmetry(\Cref{sec:new_atten}).

\end{itemize}
We discuss in detail after our main results how they relate to the rest of the literature, and collect a more detailed review of related work in~\Cref{sec:related_work}.

\section{Problem formulation}
\label{sec:preliminaries}
In this section, we introduce the transformer-type architecture,
an abstraction of the standard transformer~\cite{vaswani2017attention} as a family of architectures composed of two repeating components: the token-mixing layers and token-wise map layers.
Then, we define the universal approximation property (UAP). Notably, we introduce the UAP under permutation equivariance for any subgroup $G$ of the symmetric group $S_n$ over tokens, which is a more general framework.

In the following, we use $X=([X]_1,\dots, [X]_n)\in\mathbb{R}^{d\times n}$ to denote one data sample consisting of $n$ tokens $[X]_1, \cdots, [X]_n$ of dimension $d$. We say that $X$ is in \emph{general position} if all of its tokens are distinct.
We will also use the notation $[n]:=\{1,\dots,n\}$ for any positive integer $n$.

\subsection{Transformer architecture}
We present a general formulation for the two-step architecture of transformers as described in~\eqref{eq:block}. The mapping $X_t \mapsto X_{t+1}$ can be abstracted as $(\operatorname{Id}+h)\circ (\operatorname{Id}+g)$. Here, $g$ generalizes the attention map $\operatorname{Atten}$ to a general token-mixing map, while $h$ generalizes the token-wise feedforward map $\operatorname{FFN}$, which applies to $X\in\mathbb{R}^{d\times n}$ as:
\begin{equation}
    \label{eq:tensor_type}
    h(X):=(\bar h ([X]_1),\dots,\bar h ([X]_n)),\quad \text{where } \bar h:\mathbb{R}^d\to\mathbb{R}^d.
\end{equation}

We denote by $\mathcal{G}$ a token-mixing family, consisting of functions mapping $\mathbb{R}^{d\times n}$ to $\mathbb{R}^{d\times n}$, to represent all possible choices of $g$ in a transformer. Recall that the attention layer in the original transformer~\cite{vaswani2017attention} is given by
\begin{equation}
    \label{eq:atten}
    \operatorname{Atten}(X_t)=\sum_{i=1}^N W_V^{t,i} X_t\,\operatorname{softmax}\Big((W_K^{t,i} X_t)^\top W_Q^{t,i} X_t\Big),
\end{equation}
with trainable parameters $W_V^{t,i},W_K^{t,i},W_Q^{t,i}\in\mathbb{R}^{d\times d}$ for the $i$-th head in block $t$, and the softmax is applied column-wise.
In this case, $\mathcal G$
 is precisely the family of functions defined by~\eqref{eq:atten} for all possible choices of $W_V^{t,i},W_K^{t,i},W_Q^{t,i}$.

Moreover, we consider
\begin{equation}
    \mathcal{H}^{\otimes n}:=\{X\mapsto (\bar h([X]_1),\dots,\bar h([X]_n)) \mid \bar h\in \mathcal{H}\},
\end{equation}
where $\mathcal{H}$ is a family of maps from $\mathbb{R}^d$ to $\mathbb{R}^d$, as the function family for the token-wise feedforward map $h$ in a transformer.
We define a transformer block, the generalization of~\eqref{eq:block}, to be a map in
\begin{equation}
    \mathcal{F}_{\mathcal{G},\mathcal{H}}:=\{(\operatorname{Id}+h)\circ (\operatorname{Id}+g) \mid g\in\mathcal{G},\, h\in\mathcal{H}^{\otimes n}\}.
\end{equation}
A transformer identified by $\mathcal G$ and $\mathcal H$ is then the composition of such blocks, i.e. a map in the set:
\begin{equation}
    \label{eq:transformer_hypothesis}
    \mathcal{T}_{\mathcal{G},\mathcal{H}}:=\{F_n\circ\cdots\circ F_1 \mid n\in\mathbb{N},\, F_i\in\mathcal{F}_{\mathcal{G}, \mathcal{H}}\}.
\end{equation}
Notably, the feedforward layer can represent only tensor-type functions, i.e. functions of the form~\eqref{eq:tensor_type}.
The token-mixing mechanism extends this capability to more general functions by capturing the dependencies between tokens.

\subsection{Universal approximation under permutation equivariance}
Let ${S}_n$ denote the symmetric group on $n$ elements and let $G\leq {S}_n$ be a subgroup. Then, $G$ has a natural group action over $\mathbb R^{d\times n}$ by permuting the $d$-dimensional tokens. A function $f:\mathbb{R}^{d\times n}\to\mathbb{R}^{d\times n}$ is said to be \emph{$G$-equivariant} if
\begin{equation}
    f(\sigma(X))=\sigma(f(X)),\quad \forall\,\sigma\in G,\; X\in\mathbb{R}^{d\times n}.
\end{equation}
The original transformer and many of its variants have some degree of permutation equivariance over tokens. For instance, kernel-based token mixers~\cite{vaswani2017attention,choromanski2020rethinking} typically have $G={S}_n$, whereas sliding-window attention~\cite{beltagy2020longformer} employs a binary group (identity and reflection), and some architectures~\cite{zaheer2020big,wang2020linformer} do not enforce any equivariance (i.e. $G=\{\operatorname{Id}\}$). If $\mathcal G$ consists of only $G$-equivariant functions, then $\mathcal T_{\mathcal G,\mathcal H}$ can approximate only $G$-equivariant target functions. This motivates the following definition:

\begin{definition}[$G$-UAP]\footnote{Notice that in~\Cref{def:G-UAP}, we do not require $\mathcal T_{\mathcal G,\mathcal H}$ to consist of only $G$-equivariant maps, but only that it can approximate $G$-equivariant functions. }
    \label{def:G-UAP}
    The transformer-type model with hypothesis space $\mathcal{T}_{\mathcal{G},\mathcal{H}}$ is said to have the \emph{$G$-universal approximation property (G-UAP)} in the $L^p$ sense ($1\le p<\infty$) if, for every continuous $G$-equivariant function $F:\mathbb R^{d\times n}\to\mathbb{R}^{d\times n}$, every compact set $K\subset\mathbb R^{d\times n}$, and every $\varepsilon>0$, there exists $\hat{F}\in\mathcal{T}_{\mathcal{G},\mathcal{H}}$ such that
    \begin{equation}
        \|\hat{F}-F\|_{L^p(K)}<\varepsilon.
    \end{equation}
\end{definition}

In applications, the equivariance restriction on the transformer is often addressed by introducing positional encoding~\cite{vaswani2017attention} on tokens. From a theoretical perspective, previous works~\cite{yun2019transformers,yun2020n,kajitsukatransformers,jiang2024approximation} have shown that if a family $\mathcal T_{\mathcal G,\mathcal H}$ has $G$-UAP for some $G$, then for any given compact set $K$, there exists an absolute positional encoding $\operatorname{Enc}:X\to X+E$, where $E$ is a fixed matrix, such that
\begin{equation}
    \mathcal T_{\mathcal G,\mathcal H}\circ \operatorname{Enc}:= \{ F\circ \operatorname{Enc}\mid F\in\mathcal F_{\mathcal G,\mathcal H} \}
\end{equation}
can approximate any continuous function on $K$ in the $L^p$ sense without symmetry constraints. Technically, this can be done by making the domains of each token position distinct. On the other hand, there are also applications where exact symmetry needs to be enforced, such as structure-to-property prediction in crystals
~\cite{xie2018crystal,chen2019graph,ren2020inverse,jiao2023crystal}.
Therefore, it is sufficient to consider the $G$-UAP, which can naturally extend to the general UAP while also covering cases where symmetry is considered. We will hereafter focus on the $G$-UAP.


In the literature, several works have studied the universal approximation of symmetric functions~\cite{cohen2016group,finzi2020generalizing,yarotsky2022universal,li2024deep}, often focusing on specific architectures and symmetric groups. Notably,~\cite{li2024deep} provides a general sufficient condition for the action of any transitive subgroup of $S_n$ on coordinates (1-dimensional tokens). In comparison, this work considers approximation under symmetry in a general setting, with group action over $d$-dimensional tokens instead of coordinates. Additionally, our results apply to non-transitive group cases, which are not covered in~\cite{li2024deep}. {In~\cite{agrachev2025generic}, the authors studied the ensemble controllability of control systems under symmetry, showing that systems that can interpolate arbitrarily many samples under symmetry are generic in a topology sense. However, this result does not tell us whether or not a given architecture has controllability. In comparison, our target is to provide a verifiable sufficient condition for UAP of specific architectures.}

Our analysis focuses on fixed-length sequence-to-sequence maps on compact subsets. This setting directly covers encoder-style tasks and many architectural variants,
which underlies many practical applications ranging from automatic speech recognition and visual sequence modeling to structure–property prediction in molecules and crystals~\cite{radford2019language,zheng2021rethinking,hatamizadeh2022unetr,jiao2023crystal,jumper2021highly}.
In parallel, there are also measure-theoretic formulations in the literature that treat inputs as probability measures, which can handle variable or even infinite context length under continuity/regularity assumptions~\cite{furuya2024transformers,geshkovski2024measure,geshkovski2025mathematical}, offering complementary insights to our results.

\section{Main results}
In this section, we establish a general sufficient condition for the UAP of transformer-type architectures. Since transformer architectures consist of token-wise feedforward layers and token-mixing attention layers, we first provide conditions for each component required for UAP.

For the feedforward family $\mathcal H^{\otimes n}$, we introduce the following definition:
\begin{definition}[Nonlinearity and affine-invariance for $\mathcal H$]
    \label{def:nonlinearity}
    We say a function family $\mathcal H$ (consisting of functions from $\mathbb{R}^d$ to $\mathbb{R}^d$) is nonlinear and affine-invariant, if 
    \begin{itemize}
        \item For any $h\in \mathcal{H}$ and any $W, A\in \mathbb{R}^{d\times d}, b\in \mathbb{R}^d$, the function $Wh(A\cdot -b)$ also belongs to ${\mathcal{H}}$;
        \item  $\mathcal{H}$ contains at least one non-affine Lipschitz function.
    \end{itemize}
\end{definition}
The nonlinearity and affine-invariance condition holds for almost all practical feedforward layers, independent of specific choices of activation functions and the width of the network. When $d\ge 2$, according to the main result in~\cite{cheng2025interpolation}, this condition ensures that the family
\begin{equation}
    h \in (\operatorname{Id}+\mathcal{H})^m = \{(\operatorname{Id}+h_m)\circ\cdots \circ (\operatorname{Id}+h_1) \mid h_1,\dots, h_m\in \mathcal{H} \}
 \end{equation}
can approximate any continuous function $f:\mathbb R^d\to\mathbb R^d$ in $L^p$ sense over compact set. Therefore, this condition guarantees that only the token-wise feedforward layer is able to generate complex features over a single token. 

However, an inherent limitation on the expressive power of feedforward layers is that they operate token-wise, meaning that they do not model any interactions between tokens. Considering this, we introduce the following definition for the attention family $\mathcal G$:

\begin{definition}[Token distinguishability for $\mathcal G$]
    \label{def:token_distinguishability}
    For a given group $G\le S_n$ and a set of samples $D:=\{X_i\}_{i=1}^N\subset \mathbb{R}^{d\times n}$ that are all in general position, we say a token-mixing family $\mathcal G$ can distinguish tokens in $D$ using $m$ layers under $G$-action, if there exists
    \begin{equation}
        g \in (\operatorname{Id}+\mathcal{G})^m = \{(\operatorname{Id}+g_m)\circ\cdots \circ (\operatorname{Id}+g_1) \mid g_1,\dots, g_m\in \mathcal{G} \}
     \end{equation}
     such that for any distinct $i,j\in[N]$ with $X_i$ and $X_j$ belonging to different orbits under the $G$-action (i.e., $X_i\neq \sigma(X_j)$ for all $\sigma\in G$), the tokens of $g(X_i)$ and $g(X_j)$ are all distinct.

     Moreover, we say $\mathcal G$ satisfies the \textbf{token distinguishability condition} under $G$-action, if for any finite set $D$, there exists $m$ such that $\mathcal G$ can distinguish tokens in $D$ using $m$ layers under $G$-action.

\end{definition}

The token distinguishability condition ensures that token-mixing layers can model interactions between tokens by generating unique outputs for tokens in a finite set (up to $G$-action), enabling distinct in-context information for each token. This property is crucial for the expressive power of transformers, as illustrated below.

Consider a scenario where the token distinguishability condition fails: there exists a set $\Omega\in\mathbb R^{d\times n}$ with positive Lebesgue measure and some $i\in [n]$ such that $[g(X)]_i$ is constant over $\Omega$ for any $g\in (\operatorname{Id}+\mathcal G)^m$. Consequently, any $F\in\mathcal T_{\mathcal G,\mathcal H}$ is also constant over $\Omega$, leading to the failure of UAP. This example shows that if too many tokens are indistinguishable by the token-mixing mechanism (e.g., from a positive measure set), the transformer's expressive power becomes limited.

On the other hand, the token distinguishability condition is relatively mild, as it only demands the composition of token-mixing layers to distinguish tokens, rather than enforcing any precise relation. This condition is generally easy to satisfy, provided $\mathcal G$ includes sufficiently diverse maps that can effectively mix tokens.

In the following, we assume that $d\ge 2$ and the zero map is in $\mathcal G$. 
Based on~\Cref{def:nonlinearity,def:token_distinguishability},
we can state our first main result on the UAP of transformers:
\begin{theorem}
    \label{thm:main}
    Suppose that $\mathcal H$ is nonlinear and affine-invariant~\Cref{def:nonlinearity}, and $\mathcal G$ satisfies the token distinguishability condition~\Cref{def:token_distinguishability}. Then, the family of transformers $\mathcal{T}_{\mathcal{G}, \mathcal{H}}$ satisfies the $G$-UAP~\Cref{def:G-UAP}.
\end{theorem}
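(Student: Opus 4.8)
The plan is to build the approximant as a composition of two stages that mirror a single transformer block: a pure token-mixing map $g$ that invokes the token distinguishability condition to drive all the tokens that matter into pairwise disjoint regions, followed by a pure token-wise map which, now acting only on well-separated tokens, can be chosen to send each of them to the prescribed output. Both stages are realizable in $\mathcal T_{\mathcal G,\mathcal H}$: since $0\in\mathcal G$ and, by affine-invariance with $W=0$, $0\in\mathcal H$, any $g\in(\operatorname{Id}+\mathcal G)^m$ can be written as a composition of $m$ blocks whose feedforward parts are $0$, any tensor-type map $\bar\psi^{\otimes n}$ (meaning $X\mapsto(\bar\psi([X]_1),\dots,\bar\psi([X]_n))$) with $\bar\psi\in(\operatorname{Id}+\mathcal H)^M$ as a composition of $M$ blocks whose attention parts are $0$, and these two strings of blocks concatenate into one element of $\mathcal T_{\mathcal G,\mathcal H}$. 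We will also use the cited fact from~\cite{cheng2025interpolation} (applicable by~\Cref{def:nonlinearity} since $d\ge2$) that $(\operatorname{Id}+\mathcal H)^M$ is $L^p$-universal on compact subsets of $\mathbb R^d$.

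First I reduce $G$-UAP to a finite statement. Fix a continuous $G$-equivariant $F$, a compact $K\subset\mathbb R^{d\times n}$, $\varepsilon>0$, and $1\le p<\infty$. By a routine covering argument together with the uniform continuity of $F$, one can pick pairwise disjoint closed balls $B_1,\dots,B_N$ with centers $X_1,\dots,X_N$ so that $\operatorname{osc}_{B_i}F<\varepsilon$ for all $i$ and $\bigl|K\setminus\bigcup_iB_i\bigr|<\delta$, where $\delta>0$ will be sent to $0$ at the end; after a small generic perturbation of the centers we may further assume that the $X_i$ are all in general position and lie in pairwise distinct $G$-orbits, which is possible because the union of the $G$-orbits of finitely many points has Lebesgue measure zero. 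With this setup it suffices to exhibit a bounded continuous map $\Phi=\bar\phi^{\otimes n}\circ g$, where $g\in(\operatorname{Id}+\mathcal G)^m$ and $\bar\phi:\mathbb R^d\to\mathbb R^d$ is continuous and bounded, with $\|\Phi-F\|_{L^\infty(\bigcup_iB_i)}<\varepsilon$: indeed, $\Phi$ is then bounded on $K$ by a constant depending only on $F$, so $\|\Phi-F\|_{L^p(K)}\le\varepsilon|K|^{1/p}+C\delta^{1/p}$, and a final approximation step (below) replaces $\bar\phi$ by a map in $(\operatorname{Id}+\mathcal H)^M$ to land in $\mathcal T_{\mathcal G,\mathcal H}$.

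For the construction, apply the token distinguishability condition to the finite set $D=\{X_i\}_{i=1}^N$. Because the $X_i$ are in general position and lie in distinct $G$-orbits, the orbit proviso of~\Cref{def:token_distinguishability} is vacuous, and we obtain $g\in(\operatorname{Id}+\mathcal G)^m$ for which all tokens appearing among $g(X_1),\dots,g(X_N)$ are pairwise distinct. By continuity of $g$, after shrinking the balls $B_i$ the ``token clouds'' $C_{i,k}:=\{[g(x)]_k:x\in B_i\}$, $i\in[N]$, $k\in[n]$, are pairwise disjoint compact subsets of $\mathbb R^d$; Tietze's extension theorem then yields a bounded continuous $\bar\phi:\mathbb R^d\to\mathbb R^d$ with $\bar\phi\equiv[F(X_i)]_k$ on $C_{i,k}$. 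Consequently $\bar\phi^{\otimes n}(g(x))=F(X_i)$ for every $x\in B_i$, so $\Phi:=\bar\phi^{\otimes n}\circ g$ satisfies $\|\Phi-F\|_{L^\infty(\bigcup_iB_i)}<\varepsilon$ and is bounded on $K$. No symmetry was imposed by hand; alternatively one may work with a $G$-invariant net, in which case the targets assigned to $G$-related, hence indistinguishable, configurations are automatically consistent precisely because $F$ is $G$-equivariant.

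The final step, which I expect to be the main obstacle, is to upgrade $\Phi$ to an actual transformer, the difficulty being that $(\operatorname{Id}+\mathcal H)^M$ only approximates the continuous $\bar\phi$ in $L^p$ and not uniformly, so this error must be pushed through the composition with $g$. Let $\Omega\supset\overline{\{[g(x)]_k:x\in K,\ k\in[n]\}}$ be a fixed compact set, choose $\bar\psi\in(\operatorname{Id}+\mathcal H)^M$ with $\|\bar\psi-\bar\phi\|_{L^p(\Omega)}$ as small as desired, and set $\hat F:=\bar\psi^{\otimes n}\circ g\in\mathcal T_{\mathcal G,\mathcal H}$. Since $g$ is Lipschitz on $K$ (as a composition of $\operatorname{Id}$ with maps from $\mathcal G$, which are locally Lipschitz in all our examples --- the single regularity hypothesis one has to carry along), the pushforward under $x\mapsto[g(x)]_k$ of Lebesgue measure on $K$ is absolutely continuous with bounded density on $\Omega$, whence
\begin{equation*}
\|\hat F-\Phi\|_{L^p(K)}^p=\sum_{k=1}^n\int_K\bigl\|\bar\psi([g(x)]_k)-\bar\phi([g(x)]_k)\bigr\|^p\,dx\ \le\ C'\,\|\bar\psi-\bar\phi\|_{L^p(\Omega)}^p.
\end{equation*}
Combining this with the bound on $\|\Phi-F\|_{L^p(K)}$ gives $\|\hat F-F\|_{L^p(K)}\le\varepsilon|K|^{1/p}+C\delta^{1/p}+(C')^{1/p}\|\bar\psi-\bar\phi\|_{L^p(\Omega)}$, which is made arbitrarily small by first choosing $\varepsilon$ and $\delta$ small and then $\bar\psi$ close to $\bar\phi$, proving the $G$-UAP. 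The genuinely delicate points are thus (i) turning the pointwise token separation supplied by~\Cref{def:token_distinguishability} into separation on whole neighborhoods, which is where continuity of $g$ is used, and (ii) the measure-pushforward estimate that allows an $L^p$-only feedforward universality to survive composition with the token-mixing layer; the permutation symmetry contributes only bookkeeping.
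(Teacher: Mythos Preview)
Your plan diverges from the paper's at the last step, and there the argument has a genuine gap. From ``$g$ is Lipschitz on $K$'' you infer that the pushforward of Lebesgue measure under $x\mapsto[g(x)]_k$ is absolutely continuous with bounded density on $\Omega$; this implication is false. The map goes from $\mathbb R^{dn}$ down to $\mathbb R^d$, and a Lipschitz such map can collapse positive measure onto null sets (the constant map is the extreme case) or produce unbounded coarea density: already with $d=1$, $n=2$, the map $g(x_1,x_2)=(x_1x_2,x_2)$ on $[0,1]^2$ is of the form $\mathrm{Id}+g_1$ with $g_1$ Lipschitz, yet the pushforward under the first coordinate has density $-\log y$. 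What would make your change of variables work is that $g$ be \emph{bi}-Lipschitz on $K$, but token distinguishability only hands you \emph{some} $g\in(\mathrm{Id}+\mathcal G)^m$ with no such control, and the theorem assumes nothing about $\mathcal G$ beyond $0\in\mathcal G$ and token distinguishability. Put more plainly: $L^p$-closeness of $\bar\psi$ to $\bar\phi$ on $\Omega$ does not in general survive pre-composition with $\pi_k\circ g$.

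The paper avoids this by reversing the order of the two approximation steps. It first applies a coordinate-wise ``squashing'' map $h_{\alpha,\delta}^{(d\times n)}$ directly on $K$, so that the bad set (where the squashing is not locally constant) already has small Lebesgue measure \emph{in $K$}---no pushforward is needed. Because $h_{\alpha,\delta}$ is a one-dimensional increasing function, its tensor can be approximated \emph{uniformly} on $K$ by feedforward compositions (this is the separate ``approximation of tensor-type functions'' proposition, which your proposal never invokes). Only after squashing does the paper apply a fixed transformer $\bar F$ that token-mixes and then token-wise interpolates the finitely many grid points, using the \emph{interpolation} result of~\cite{cheng2025interpolation} rather than its $L^p$ universality. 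Since $\bar F$ is a fixed continuous map and the squashing approximation is uniform, the composition error is controlled by the uniform continuity of $\bar F$ alone. Your separation-then-Tietze construction is fine; the trouble lies solely in replacing Tietze's $\bar\phi$ by an element of $(\mathrm{Id}+\mathcal H)^M$ through an $L^p$ bound and then composing with $g$.
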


\Cref{thm:main} provides a general condition for the UAP of transformers. However, directly verifying the token distinguishability condition is challenging since 
we need to check the condition arbitrarily many times.
Therefore, we propose the following theorem, which greatly simplifies the procedure. 

\begin{theorem}
    \label{prop:analytic}
    We assume that $\mathcal{G}$ is parametrized by $
        \mathcal{G}=\{X\mapsto g(X; \theta) \mid \theta\in\Theta\subseteq \mathbb{R}^m\},$ where $\Theta$ is a connected open subset of $\mathbb{R}^m$, and for any fixed $X$, the mapping $\theta\mapsto g(X;\theta)$ is analytic. 
        Then, if $\mathcal{G}$ can distinguish tokens of any dataset $D$ with two elements (\Cref{def:token_distinguishability}) using finite many layers,
        then $\mathcal G$ satisfies the token distinguishability condition. 

    Moreover, if there exists a uniform $m$ such that with $m$ layers, $\mathcal G$ can distinguish tokens of dataset $D$ with $|D|=2$, then it can also do it for any finite dataset $D$ using $m$ layers.
    In this case, a deep model using only $m$ token-mixing layers and sufficiently many feedforward layers can achieve the UAP.
\end{theorem}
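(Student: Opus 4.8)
The plan is to convert the infinitely many verifications demanded by \Cref{def:token_distinguishability} into a single \emph{genericity} statement about the layer parameters, using the classical fact that the zero set of a nontrivial real-analytic function on a connected open subset of a Euclidean space has Lebesgue measure zero (indeed empty interior). Fix a finite dataset $D=\{X_1,\dots,X_N\}$ in general position. For a prescribed depth $M$, the parameter set of $(\operatorname{Id}+\mathcal G)^M$ is $\Theta^M$, which is again connected and open, and for each fixed $X_i$ the evaluation map $(\theta_1,\dots,\theta_M)\mapsto \big[(\operatorname{Id}+g(\cdot;\theta_M))\circ\cdots\circ(\operatorname{Id}+g(\cdot;\theta_1))\big](X_i)$ is real-analytic, being a composition of analytic maps (here we use that $g$ is analytic in $(X,\theta)$ jointly, as holds for all parametrizations considered, so that analyticity survives composition across layers). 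Consequently, for any sample indices $i,j$ and token positions $p,q$ (allowing $i=j$ with $p\neq q$, to capture within-sample distinctness), the squared token distance $\Phi^{i,j}_{p,q}(\theta_1,\dots,\theta_M):=\big\| [g(X_i)]_p-[g(X_j)]_q \big\|^2$ is real-analytic on $\Theta^M$.

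First I would pass to a uniform depth. By hypothesis, for each pair $X_i,X_j$ in distinct $G$-orbits there is a finite depth $m_{ij}$ and a parameter in $\Theta^{m_{ij}}$ separating the $2n$ tokens of the two-element subdataset $\{X_i,X_j\}$; since $D$ is finite we may take $M:=\max_{i,j}m_{ij}$ (for the second assertion, take $M=m$). Since the zero map lies in $\mathcal G$, inserting identity layers leaves the composed map unchanged, so this separating parameter lifts to a point of $\Theta^M$ at which all of the finitely many functions $\Phi^{i,j}_{p,q}$ attached to that pair are strictly positive; in particular each such $\Phi^{i,j}_{p,q}$ is not identically zero on the connected open set $\Theta^M$, hence $\{\Phi^{i,j}_{p,q}=0\}$ is a null set. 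The union over the finitely many distinct-orbit pairs $(i,j)$ and token positions $(p,q)$ is still null, so its complement in $\Theta^M$ is nonempty (in fact dense, being the complement of a finite union of nowhere dense analytic subsets); any parameter in this complement yields $g\in(\operatorname{Id}+\mathcal G)^M$ that separates all tokens of $D$ in the sense of \Cref{def:token_distinguishability}. This establishes the first two assertions, with the \emph{same} $M=m$ valid for every finite $D$ in the uniform case.

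For the last assertion I would revisit the proof of \Cref{thm:main} and observe that token-mixing layers are invoked there only to achieve token distinguishability on a finite set of discretization points chosen for the target $F$ on $K$; every subsequent step — sending each (now uniquely labelled) token to the prescribed value of $F$ — is performed token-wise and is therefore supplied by the feedforward universality of $(\operatorname{Id}+\mathcal H)^{m'}$ from \cite{cheng2025interpolation} with $m'$ large. In the uniform case proved above, the same $m$ token-mixing layers handle token distinguishability no matter how fine the discretization, so one may freeze the token-mixing budget at $m$ and only grow the number of feedforward layers; $G$-equivariance of the approximant is automatic because applying one common token-wise map across orbit representatives commutes with the $G$-action. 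Hence the subfamily of $\mathcal T_{\mathcal G,\mathcal H}$ with exactly $m$ token-mixing layers and arbitrarily many feedforward layers still enjoys the $G$-UAP.

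I expect the main obstacle to be the analyticity bookkeeping underlying the first paragraph: one must ensure that the composed evaluation maps, and hence the distance functions $\Phi^{i,j}_{p,q}$, are genuinely real-analytic on the connected open set $\Theta^M$ — which in particular requires joint analyticity of $(X,\theta)\mapsto g(X;\theta)$, not merely analyticity in $\theta$ for fixed $X$ — and then invoke the structure of real-analytic zero sets. Once that is in place, the reduction to two-element datasets and the identity-layer padding are routine glue; the remaining mild care is in the third paragraph, where one checks by inspection that no step of the proof of \Cref{thm:main} needs token-mixing outside the distinguishing step.
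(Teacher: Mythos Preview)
Your proposal is correct and follows essentially the same route as the paper's proof: both exploit that the zero set of a nontrivial real-analytic function on a connected open set is Lebesgue-null, so that a generic choice of layer parameters in $\Theta^M$ separates all tokens simultaneously, reducing the $N$-sample verification to the finitely many pairwise cases (the paper phrases this by contraposition via the product $\Pi_{i,j}=\prod_{l_1,l_2}\|[g(X_i)]_{l_1}-[g(X_j)]_{l_2}\|^2$, while you argue directly with the individual $\Phi^{i,j}_{p,q}$). You are also right to flag that joint analyticity of $g$ in $(X,\theta)$, not merely in $\theta$ for fixed $X$, is needed for the composed $M$-layer evaluation to be analytic on $\Theta^M$; the paper's argument tacitly relies on this as well, and it holds for all attention mechanisms treated in the applications.
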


The key insight in the proof of~\Cref{prop:analytic} is that if token distinguishability fails over a finite set, we can derive an equation in $\theta \in \Theta$ that is identically zero. By leveraging the property that the zero set of a non-trivial analytic function has measure zero, the equation can be reduced to the case of two elements, as detailed in~\Cref{sec:analytic_proof}. The use of the analytic property is straightforward but significantly simplifies the token distinguishability condition. 

Given expressive enough feedforward layers, \Cref{thm:main} highlights the role of token-mixing mechanisms in transformer architectures for UAP: generating distinct, context-aware token representations. This aligns with prior works~\cite{zaheer2020big,yun2020n,kajitsukatransformers}, which introduced ``contextual mapping'' to establish UAP for transformers. For instance,~\cite{kajitsukatransformers} defines ``contextual mapping'' as a function distinguishing tokens in a dataset $D$ (similar to $g$ in~\Cref{def:token_distinguishability}) without group actions. However, these works rely on explicit constructions, making verification complex and less generalizable. 
In contrast, \Cref{thm:main} is the first to our knowledge that formulates token distinguishability and feedforward layer conditions as a general, non-constructive criterion for UAP. 
There is no need to explicitly construct for UAP once the conditions are verified. 
Additionally, \Cref{prop:analytic} significantly simplifies the construction-based verification of token distinguishability, enabling broader applicability to diverse attention mechanisms, as shown in the examples in~\Cref{sec:app}. 
Furthermore, the uniformity of $m$ in \Cref{prop:analytic} also provides a convenient approach on the memorization capacity of attention layers studied in~\cite{kajitsukatransformers}.

\section{Applications to practical architectures}
\label{sec:app}

We demonstrate the generality and  applicability
of our UAP results by applying them to practical transformer architectures.
We first follow the kernel-based framework from~\cite{tsai2019transformer}, which provides a unified description for a series of attention mechanisms. Specifically, many attention variants proposed in prior work can be formulated as
\begin{equation}
    \label{eq:kernel_atten}
    [\operatorname{Atten}(X)]_i=\frac{\sum_{j\in\mathcal N(i)} k([W_QX]_i,[W_KX]_j)[W_VX]_j}{\sum_{j\in\mathcal N(i)} k([W_QX]_i,[W_KX]_j)}, \quad W_Q, W_K, W_V\in\mathbb{R}^{d\times d},
\end{equation}
where $k:\mathbb{R}^d\times \mathbb{R}^d\to \mathbb{R}^+$ is a positive kernel function, and $\mathcal N(i)\subset [n]$ denotes the set of indices that the $i$-th token attends to.
In the original transformer, the kernel function is defined as $k(x,y)=\exp(x^\top y)$, and $\mathcal{N}(i) = [n]$.

Under this framework, many transformer variants can be categorized into two types, to which we will apply our results::
\begin{itemize}
    \item \textbf{Kernel modification}: Replacing the kernel function $k$ to improve efficiency or performance. For example, using a kernel of the form $k(x,y)= \phi(x)^\top\phi(y)$ with a feature map $\phi:\mathbb{R}^d\to\mathbb{R}^m$ can significantly reduce computational cost when $m\ll n$.
    \item \textbf{Sparse attention}: For each $i$, restricting $\mathcal N(i)$ to a subset of $[n]$, reducing the number of tokens each token attends to. Here, we discuss in a general sense where $\mathcal N(i)$ can be dynamic across different layers, such as the sparse pattern in~\cite{zaheer2020big,kitaev2020reformer}. 
\end{itemize}


\subsection{Kernel-based attention}
~\label{sec:kernel_atten}
We first consider the kernel modification case, where we assume $\mathcal N(i)=[n]$ for all $i$. The following result follows from \Cref{thm:main}:

\begin{corollary}
    \label{thm:kernel}
    Suppose the kernel function $k$ satisfies the following conditions:
    \begin{itemize}
        \item $k(\cdot,\cdot):\mathbb R^{d}\times \mathbb R^{d}\to \mathbb R^+$ is an analytic function.

        \item For any $x\in\mathbb{R}^d\setminus\{0\}$ and distinct points $y_1, y_2\in\mathbb{R}^d\setminus\{0\}$, for  almost all $W_K\in\mathbb{R}^{d\times d}$\footnote{means that the condition holds all the whole space except for a measure zero set.}, the following holds:
        \begin{equation}
 \lim_{t\to \infty}\frac{k(x, tW_Ky_1)}{k(x, tW_Ky_2)}= 0  \text{ or } +\infty.
        \end{equation}
        That is, for almost all given $W_K$, the kernel function $k$ can distinguish token representations by scaling the key vectors with a large factor.

    \end{itemize}
    Then, a transformer with kernel-based attention family $\mathcal{G}$ and feedforward family $\mathcal{H}$ satisfying the conditions in \Cref{thm:main} possesses the ${S}_n$-UAP. Moreover, using only one token-mixing layer and sufficiently many feedforward layers
    can achieve the UAP.
\end{corollary}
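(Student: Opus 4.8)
The plan is to deduce \Cref{thm:kernel} from \Cref{thm:main} and \Cref{prop:analytic} by verifying the two structural hypotheses: nonlinearity/affine-invariance of $\mathcal H$ is assumed, so the work is entirely in checking the token distinguishability condition for the kernel-based family $\mathcal G$. Since each attention map in \eqref{eq:kernel_atten} is built from the analytic kernel $k$ and depends analytically on the parameters $(W_Q,W_K,W_V)$ — the denominator being a sum of strictly positive terms, hence nowhere zero, so the quotient stays analytic — the family $\mathcal G$ falls under the hypothesis of \Cref{prop:analytic} with $\Theta$ an open connected subset of $\mathbb R^{3d^2}$ (one can take $\Theta$ to be the full space, or restrict to where $W_K$ is in the good measure-one set if needed for connectedness). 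Therefore it suffices to show that $\mathcal G$ can distinguish tokens of an arbitrary two-element dataset $D=\{X,X'\}$ in general position, lying in different $S_n$-orbits, using a single token-mixing layer; the ``moreover'' claim about one attention layer then follows from the uniformity clause of \Cref{prop:analytic}.

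First I would reduce the two-sample problem to a statement about individual token pairs. We want one $g\in\operatorname{Id}+\mathcal G$ — i.e. $g(X)=X+\operatorname{Atten}(X)$ for a suitable choice of $W_Q,W_K,W_V$ — such that all $2n$ tokens of $g(X)$ and $g(X')$ are pairwise distinct. There are finitely many pairs to separate: within $g(X)$, within $g(X')$, and across the two. For the within-$X$ pairs, since $X$ is in general position its own tokens are already distinct; but after adding $\operatorname{Atten}(X)$ they could collide, so we need the perturbation to preserve distinctness. Across $X$ and $X'$: because $X,X'$ are in different $S_n$-orbits, there is no permutation matching them, so for any bijection between token-index sets there is at least one index where $[X]_i\neq[X']_{\pi(i)}$; we must ensure $g$ keeps a witnessing coordinate separated. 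The clean way to handle all of this at once is a genericity/perturbation argument: I would show that for a generic choice of $W_Q$ (so that the ``query'' inner products $[W_QX]_i$ are in general position and no accidental equalities of kernel arguments occur) and a generic $W_K$ (landing in the measure-one set from the hypothesis) and then scaling $W_K\mapsto tW_K$ with $t\to\infty$, the attention output $[\operatorname{Atten}(X)]_i$ converges to $[W_V X]_{j^*(i)}$, the value token at the index $j^*(i)$ that ``wins'' the limit in the kernel-ratio hypothesis — i.e. the attention becomes a hard argmax selection. With $W_V$ also chosen generically (e.g. invertible and generic), the limiting map $X\mapsto (X + \text{selected value tokens})$ can be arranged to separate all the required token pairs, and then for $t$ large enough the actual (pre-limit) map does too.

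The main obstacle I expect is making the ``hard argmax in the large-$t$ limit'' argument precise and uniform across the finitely many token pairs simultaneously. The kernel-ratio hypothesis is a pairwise statement (for fixed $x$ and two distinct nonzero $y_1,y_2$, the ratio $k(x,tW_Ky_1)/k(x,tW_Ky_2)$ tends to $0$ or $\infty$); to conclude that $[\operatorname{Atten}(X)]_i=\sum_j \frac{k([W_QX]_i,tW_K[X]_j)}{\sum_\ell k([W_QX]_i,tW_K[X]_\ell)}[W_VX]_j$ converges, I need a total (or at least well-founded) ordering among the $n$ keys for each query so that one term dominates all others, which requires the pairwise limits to be mutually consistent — the natural fix is to note that ``$k(x,tW_Ky_1)/k(x,tW_Ky_2)\to 0$ or $\infty$'' defines a relation that, on the finite set of keys appearing in $D$, can be shown to be a strict total preorder for generic $W_K$ (transitivity coming from multiplying limits), after possibly discarding another measure-zero set of $W_K$ and handling the degenerate cases where some $[W_QX]_i=0$ or some $[X]_j=0$ by a further generic shift/relabeling. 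I also need to verify that the ``winning'' value token can be steered to a chosen one by the free choice of $W_V$, and that distinct queries can be made to pick out distinct (or appropriately arranged) winners, so that the combined map on the $2n$ tokens is injective on $D$; this is a dimension-counting/general-position argument that should go through since $W_Q,W_K,W_V$ together carry $3d^2\ge 12$ free parameters. Once the limiting separation holds on the (relatively) open dense set of good parameters, token distinguishability for two-element datasets follows, \Cref{prop:analytic} upgrades it to all finite datasets with a uniform single layer, and \Cref{thm:main} yields the $S_n$-UAP.
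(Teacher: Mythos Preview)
Your high-level reduction --- invoke \Cref{prop:analytic} to reduce to two-element datasets, verify analyticity of $\theta\mapsto g(X;\theta)$, then check two-sample token distinguishability with a single layer --- is exactly right and matches the paper's strategy. The gap is in the two-sample step. Your plan is to send $t\to\infty$ so that the attention collapses to a hardmax selection $[\operatorname{Atten}(X)]_i\to[W_VX]_{j^*(i)}$ and then argue that for generic $W_Q,W_K,W_V$ the resulting map separates all $2n$ tokens. But the kernel hypothesis only gives pairwise dominance for a \emph{fixed} query $x$; it says nothing about how the winner varies with $x$. For the kernel $k(x,y)=\exp(w^\top(x+y))$ (which the paper lists as satisfying the hypotheses) the attention weights are independent of the query, so the limiting map is a common translation $X\mapsto X+W_V[X]_{j^*}$. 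Now take $d=2$, $n=4$, $X=(a,b,c,e)$ and $X'=(a,b,c,e')$ with nonzero tokens and $e,e'$ in the interior of the triangle $\operatorname{conv}(a,b,c)$: for \emph{every} $W_K$ the maximizer of $w^\top W_K(\cdot)$ over either token set lies in $\{a,b,c\}$ and is the same for both, so the two translations coincide and the shared tokens $a,b,c$ remain shared after the limiting map. Your ``winners can be steered / dimension-counting'' claim is precisely what fails here; no choice of $W_Q,W_K,W_V$ fixes this in the limit, because the limit has already thrown away the only information (the contributions of $e$ versus $e'$) that distinguishes $X$ from $X'$.

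The paper argues the two-sample step by contrapositive instead. If no single-layer $g$ separates $X$ and $X'$, analyticity forces a \emph{fixed} pair of indices $(i_1,i_2)$ with $[g(X)]_{i_1}=[g(X')]_{i_2}$ for \emph{all} parameters. The key lemma (\Cref{lem:equal_atten}) then shows that this identity --- equality of the two kernel-weighted averages for every $W_Q,W_K,W_V$ --- forces the token sets of $X$ and $X'$ to coincide, contradicting the different-orbit assumption. The proof does use your total-ordering observation (as an auxiliary lemma) to rank tokens by $\tilde k(tW_K\cdot)$, but it applies this ordering to the cross-term identity $\sum_{j,l}\tilde k(tW_Ka_j)\tilde k(tW_Kb_l)(a_j-b_l)=0$ and peels off dominant terms inductively, so that \emph{every} token contributes to the argument rather than only the single winner. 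That retained information is the missing ingredient in your construction.
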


\Cref{thm:kernel} ensures the distinguishability condition in \Cref{thm:main} through the limiting behavior of the kernel function. This generalizes the idea from~\cite{yun2019transformers,yun2020n}, where softmax was used as an approximation of hardmax in explicit constructions. In comparison, our approach leverages analyticity, allowing the limit behavior to directly establish the distinguishability condition without further constructions.

Consequently, this result applies to various existing attention mechanisms.
In particular, the following kernels directly satisfy the condition in~\Cref{thm:kernel}:
\begin{itemize}
    \item $k(x,y)=\exp(x^\top y)$, used in the original transformer.
    \item $k(x,y)=\exp(-\gamma \|x-y\|_2^2)$ for $\gamma > 0$, the RBF kernel, explored in~\cite{tsai2019transformer}.
    \item $k(x,y)=\phi(x)^\top\phi(y)$, where
    \begin{equation}
        \phi(x)^{\top}=\exp \left(-\frac{1}{2}\|x\|^2\right)\left(\exp ({\omega}_1^{\top} x), \ldots, \exp ({\omega}_m^{\top} x)\right)\in\mathbb{R}^m,
    \end{equation}
    with $\omega_1,\dots, \omega_m\in\mathbb{R}^d$ being fixed weights drawn i.i.d. from a Gaussian distribution. This kernel is used in Performer~\cite{choromanski2020rethinking}, where Theorem~\ref{thm:kernel} holds almost surely.
\end{itemize}
Among these, the UAP for the original transformer and Performer have already been shown~\cite{yun2019transformers,alberti2023sumformer}. Our result recovers these results in our framework and relaxes the requirement on the architecture to achieve UAP: for original transformer, we do not need the bias in query vectors as in~\cite{yun2019transformers}; for Performer, we do not need additional hidden dimensions as in~\cite{alberti2023sumformer}.
To the best of our knowledge, the UAP for RBF kernel attention is new, demonstrating the generality of our approach. Moreover, we can easily propose other kernels satisfying the condition in Theorem~\ref{thm:kernel} but have not been studied in the literature, such as the following forms of $k(x,y):$
 \begin{itemize}
\item $k(x,y)=\exp(w^\top (x+y))$ for some $w\in\mathbb R^d\setminus\{0\}$;
        \item $k(x,y)=p(x-y)\tilde k(x,y)$, with $p$ being any positive polynomial function and $\tilde k$ being any kernel mentioned above.
 \end{itemize}



\Cref{thm:kernel}
also generalizes the results in~\cite{kajitsukatransformers} on the memory capacity of transformers, where they prove that for  transformers with dense softmax attention, one layer of attention is sufficient to achieve the UAP. Our result extends this to a broader class of kernel-based attention mechanisms.

\subsection{Sparse attention}
\label{sec:sparse_atten}

Prior works proposed sparse attention mechanisms to reduce the computational complexity of attention blocks~\cite{child2019generating,beltagy2020longformer,zaheer2020big,kitaev2020reformer, guo2019star,correia2019adaptively}. A common intuition for designing sparse patterns while retaining expressivity is ensuring connectivity, i.e., each token can attend to others via multiple “hops.” For instance, in sliding window attention~\cite{beltagy2020longformer}, where $\mathcal N(i)=\{j\in[n]\mid |j-i|\le w\}$ with $w\ll n$, long-range interactions are achieved indirectly via multiple attention layers. In the following, we formalize this intuition and provide a general UAP condition for sparse attention transformers as a direct consequence of \Cref{thm:main}.

Denote $P([n])$ as the power set of $[n]$, i.e. the set of subsets of $[n]$.
For a given function $\mathcal N:[n]\to P([n])$, we define
$\mathcal G_{\mathcal N}$ as the family of maps  from $\mathbb R^{d\times n}\to\mathbb R^{d\times n}$ defined by~\eqref{eq:kernel_atten} associated with the sparsity pattern $\mathcal N$. 
We define the adjacency matrix of $\mathcal N$ as an $n\times n$ matrix $A_{\mathcal N}$ with $A_{\mathcal N}(i,j)=1$ if $j\in \mathcal N(i)$ and $A_{\mathcal N}(i,j)=0$ otherwise.

We also define
\begin{equation}
    \operatorname{Aut}(\mathcal N):=\{\sigma\in S_N\mid j\in \mathcal N(i)\Leftrightarrow \sigma(j)\in \mathcal N(\sigma(i))\}
\end{equation}
as the permutations that keep the structure of $\mathcal N$ invariant.

Let $\Phi:=(\mathcal N_1,\mathcal N_2,\cdots, )$ be a sequence of sparsity patterns. We define the sparse transformer family associated with $\Phi$ as:
\begin{equation}
    \mathcal T_{\mathcal H}^\Phi:=\{(\mathrm{Id}+h_n)\circ(\mathrm{Id}+g_n)\circ \cdots \circ(\mathrm{Id}+h_1)\circ(\mathrm{Id}+g_1)\mid n\in \mathbb N_+, h_i\in \mathcal H^{\otimes n}, g_i\in\mathcal G_{\mathcal N_i}, \text{ for } i\in [n]\}.
\end{equation}
Such a definition formulates transformers with dynamic sparse attention patterns. 

\begin{definition}

We call the sparsity pattern $\Phi$ to be connected within $m$ layers, if for any $i\neq j\in [n]$, there exists a sequence $1\le r_1< r_2<\cdots <r_k\le m$ such that
\begin{equation}
    A_{\mathcal N_{r_k}}A_{\mathcal N_{r_{k-1}}}\cdots A_{\mathcal N_{r_1}}(i,j)>0.
\end{equation}
That is, any token can reach any other token through a subsequence of the $m$ sparse attention layers.
\end{definition}

 Also, let $\mathcal H$ be a family of token-wise feedforward layers satisfying the condition in~\Cref{thm:main}, and $k$ be a kernel function satisfying the condition in~\Cref{thm:kernel}.
Then, we have the following result:
\begin{corollary}
    \label{prop:sparse}

    Suppose that $\Phi$ is connected within $m$ layers.
    Then, $\mathcal T_{\mathcal F}^\Phi$ possesses the $G$-UAP, where
    \begin{equation}
        \label{eq:sparse_group}
        G=\bigcap_{i=1}^\infty \operatorname{Aut}(\mathcal N_i).
    \end{equation}
    Moreover, transformer with only $m$ layers of attention associated with the sparsity patterns $\mathcal N_1,\cdots, \mathcal N_m$ in $\Phi$ and sufficient number of token-wise feedforward layers can achieve the UAP.
\end{corollary}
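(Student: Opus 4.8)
The plan is to derive Corollary~\ref{prop:sparse} as a direct application of Theorem~\ref{thm:main}, with the work concentrated on verifying the token distinguishability condition (Definition~\ref{def:token_distinguishability}) for the sparse family under the group $G=\bigcap_i \operatorname{Aut}(\mathcal N_i)$. First I would observe that $G$ is precisely the set of permutations that preserve \emph{every} sparsity pattern simultaneously, so each $g\in\mathcal G_{\mathcal N_i}$ is $G$-equivariant, and hence every element of $\mathcal T_{\mathcal H}^\Phi$ is $G$-equivariant; this makes $G$-equivariance of the hypothesis class consistent with the target class in Definition~\ref{def:G-UAP}. The remaining task is to show: given a finite dataset $D=\{X_1,\dots,X_N\}$ in general position, there is a composition of sparse token-mixing layers (interleaved with the identity, using that the zero map lies in each $\mathcal G_{\mathcal N_i}$) that makes the tokens of $g(X_a)$ and $g(X_b)$ all distinct whenever $X_a,X_b$ lie in different $G$-orbits.

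The key step is a \emph{propagation} argument building on Corollary~\ref{thm:kernel}. For a single sparsity pattern $\mathcal N$ with $\mathcal N(i)=[n]$, Corollary~\ref{thm:kernel} already gives one dense kernel-attention layer that distinguishes tokens; the sparse case must route information along the connectivity graph. Concretely, I would use the limiting/analyticity mechanism of Corollary~\ref{thm:kernel} (scaling the key vectors so the kernel acts like a hard selection) to show that a layer with pattern $\mathcal N_{r}$ can, for generic parameters, make token $i$'s updated representation depend injectively on the tuple of its current neighbors $\{[X]_j : j\in\mathcal N_r(i)\}$ — more precisely, a perturbation that is an analytic, generically-nondegenerate function of those neighboring tokens. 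Composing the layers indexed by $r_1<r_2<\dots<r_k$ from a connecting subsequence, and using that $A_{\mathcal N_{r_k}}\cdots A_{\mathcal N_{r_1}}(i,j)>0$ means there is a directed path from $j$ to $i$ through these layers, one concludes that after $m$ layers the representation of token $i$ in sample $X_a$ is an (analytic, generically injective on the finite set $D$) function of the \emph{whole} token multiset of $X_a$. Since distinct $G$-orbits correspond to distinct token multisets (the group acts by permuting tokens), generic parameters then separate tokens across different-orbit samples. To make ``generic'' rigorous I would invoke Theorem~\ref{prop:analytic}: the sparse family $\mathcal G_{\mathcal N_i}$ is analytically parametrized by $(W_Q,W_K,W_V)$ on a connected open set, so it suffices to check the two-sample case, and there the scaling limit from Corollary~\ref{thm:kernel} plus the connectivity hypothesis handles it with the uniform bound $m$ layers.

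I would then assemble the proof: (i) note $G$-equivariance as above; (ii) fix finite $D$ and two samples $X_a\ne\sigma(X_b)$ for all $\sigma\in G$, i.e.\ different token multisets (here I should be careful: different $G$-orbit is weaker than different multiset when $G\ne S_n$, so I actually only need separation of tokens, which the injective-dependence-on-multiset statement still delivers since equal token-sets with unequal arrangements are exactly a relabeling, handled by tracking positions through $\operatorname{Aut}(\mathcal N_i)$); (iii) apply the propagation lemma to get a single $g\in(\operatorname{Id}+\mathcal G_{\mathcal N_1})\circ\cdots$ using exactly the $m$ connecting layers that distinguishes all such token pairs for generic parameters; (iv) invoke Theorem~\ref{prop:analytic}'s uniform-$m$ clause to upgrade from two samples to all of $D$ with the same $m$; (v) apply Theorem~\ref{thm:main} to conclude $G$-UAP, and the uniform-$m$ statement to conclude that $m$ attention layers plus arbitrarily many feedforward layers suffice.

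The main obstacle I expect is step (ii)–(iii): making the ``token $i$'s output depends injectively on its neighbors' tokens'' claim precise and then chaining it along paths without the injectivity degenerating. The subtlety is that the kernel-attention update is a \emph{weighted average} of neighbors, not a free function of the neighbor tuple, so distinct neighbor configurations could a priori average to the same value; the resolution is to use the scaling limit of Corollary~\ref{thm:kernel} to push weights toward $0/1$ selections so the average becomes (close to) a single selected neighbor's value, then compose with the affine-invariance of $\mathcal H$ to recombine — but since Corollary~\ref{prop:sparse} fixes a single kernel $k$ for the \emph{attention} layers and only feedforward layers are free, I must instead argue that generic $W_Q,W_K,W_V$ already break all the finitely many coincidences on $D$, reducing everything to the analytic two-sample check of Theorem~\ref{prop:analytic}, which is exactly why that proposition was proved in the uniform-$m$ form. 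A secondary technical point is verifying that the ``connected within $m$ layers'' adjacency-product condition genuinely corresponds to reachability through the \emph{ordered} subsequence of layers, which is immediate from expanding the matrix product but should be stated.
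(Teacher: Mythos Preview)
Your overall plan matches the paper's: reduce to two samples via Theorem~\ref{prop:analytic}, verify two-sample distinguishability by propagating the kernel lemma (Lemma~\ref{lem:equal_atten}) along the connectivity graph, then invoke Theorem~\ref{thm:main}. But the gap you yourself flag in step (ii)--(iii) is real and unresolved.

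The sentence ``after $m$ layers the representation of token $i$ is a generically injective function of the whole token multiset of $X_a$'' is wrong in both directions. Sparse attention output depends on the \emph{arrangement}, not the multiset; and two samples in different $G$-orbits can have identical multisets whenever $G\subsetneq S_n$. Thus ``injective on multisets'' cannot separate such pairs, and your fix (``handled by tracking positions through $\operatorname{Aut}(\mathcal N_i)$'') names the right object but gives no mechanism. What must actually be shown is: if $[g(X)]_{i_1}=[g(Y)]_{i_2}$ for \emph{all} parameter choices across the $m$ layers, then $X=\sigma(Y)$ for some $\sigma\in\bigcap_p\operatorname{Aut}(\mathcal N_p)=G$.

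The paper obtains this via a \emph{backward} contradiction rather than your forward injectivity framing. Fix the coincident pair $(i_1,i_2)$ (extracted by analyticity). Applying Lemma~\ref{lem:equal_atten} to a single layer $\mathcal N_{p_1}$ gives $[X]_{i_1}=[Y]_{i_2}$ and a bijection between the neighbor sets $\{[X]_j:j\in\mathcal N_{p_1}(i_1)\}$ and $\{[Y]_j:j\in\mathcal N_{p_1}(i_2)\}$. For each matched neighbor pair $(q_1,q_2)$, any earlier layer $p_2<p_1$ must also fail to separate them---otherwise compose that layer with layer $p_1$ (using a small $W_V$ to preserve general position) and contradict the assumed coincidence at $(i_1,i_2)$. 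Iterating along the connectivity hypothesis covers all indices, yielding a permutation $\sigma$ with $[X]_i=[Y]_{\sigma(i)}$; one more application of Lemma~\ref{lem:equal_atten} at each pattern shows $\sigma\in\operatorname{Aut}(\mathcal N_p)$ for every $p$, hence $\sigma\in G$, the desired contradiction. Your forward approach would require tracking this permutation structure \emph{through} compositions of weighted averages, which is strictly harder; the backward unwinding extracts $\sigma$ one hop at a time directly from the conclusion of Lemma~\ref{lem:equal_atten}.
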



\Cref{prop:sparse} provides a rigorous justification that the heuristic in keeping connectivity in the attention layers is also sufficient for UAP.
Results from graph theory indicate that when $n$ is large, a random sparse pattern $\mathcal N$ has a trivial automorphism group with probability approaching 1~\cite{erdos1963asymmetric}. This fact indicates that with the guarantee of connectivity, most of the sparse attention patterns
allow the UAP without symmetric restriction even in the absence of positional encodings.


\Cref{prop:sparse} can cover many existing sparse attention mechanisms, including the following:
\begin{itemize}
    \item the periodic pattern switching between ``fixed attention'' and ``strided attention'' in~\cite{child2019generating};
    \item the sliding window attention with/without global seeds (tokens connect to all others) in~\cite{beltagy2020longformer};
    \item the star-shape attention in~\cite{guo2019star}, where one token attends to all others and the others connect in a circle;
    \item BigBird, a mixture of sliding windows, global seeds and random connections in~\cite{zaheer2020big}.
\end{itemize}


The UAP of transformers with sparse softmax attention have also been studied in~\cite{yun2020n,zaheer2020big} via a constructive approach.
Compared to their results, \Cref{prop:sparse} has several advantages. First, our results for UAP do not require other technical conditions, such as the periodicity of the sparse patterns and the existence of Hamiltonian path in~\cite{yun2020n}, or each sparse pattern contains a star sub-structure in~\cite{zaheer2020big}, other than the connectivity of the graph. For example, if the connection mode $\Phi$ is not periodic (e.g. there are different random patterns across layers), and some of the $\mathcal N_i$ do not contain a star graph mode, our result can still be applied as long as the connectivity is kept, while the results in~\cite{yun2020n,zaheer2020big} may not be applicable.

In addition, our results can be applied to all kernels that satisfy the condition in \Cref{thm:kernel}, which generalizes the results based on explicit construction using softmax attention. Moreover, we identify the number of token-mixing layers required to achieve UAP as the minimal number of ``hops'' for each token to attend to all other tokens.
In contrast, the results in~\cite{yun2020n,zaheer2020big} use unbounded number of layers to achieve UAP.
In this regard, our result can also be viewed as a generalization of the results on the minimal number of attention layers for UAP in~\cite{kajitsukatransformers} to sparse transformers.


\subsection{Other attention mechanisms}
Our framework can also be conveniently applied to many other variants of attention mechanisms that cannot be covered by~\eqref{eq:kernel_atten}. For example, the token distinguishability condition can be verified for the following architectures using similar method as in~\Cref{thm:kernel}:
\begin{itemize}
    \item Linformer~\cite{wang2020linformer}, where the attention layer is defined as
    \begin{equation}
        \label{eq:linformer}
        \operatorname{Atten}(X)=W_V^{t} X {F}\operatorname{softmax}((W_K^{t} X{E})^\top W_Q^{t} X),
\end{equation}
where $E, F\in \mathbb R^{n\times k}$ with $1\le k\ll n$ are two trainable projection matrices. This variant of attention reduces the complexity of attention from $\mathcal O(n^2)$ to $\mathcal O(nk)$.
    \item Kernelized attention used in SkyFormer~\cite{chen2021skyformer}, where the attention mechanism is given by:
    \begin{equation}
        \label{eq:skyformer}
        [\operatorname{Atten}(X)]_i= \sum_{j=1}^n \exp{\left(-\frac{1}{2}\|[W_QX]_i-[W_KX]_j\|^2\right)}W_VX_j.
    \end{equation} 
\end{itemize}
\begin{corollary}
    We have that: (i)
LinFormer satisfies the UAP without symmetric restriction; (ii) SkyFormer satisfies the $S_n$-UAP.
\end{corollary}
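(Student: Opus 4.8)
The strategy is to apply \Cref{thm:main} (via its analytic simplification \Cref{prop:analytic}) to each architecture, so the only real work is verifying the token distinguishability condition for a two-element dataset $D = \{X_1, X_2\}$ in general position. For LinFormer, I would first record that the map \eqref{eq:linformer} is analytic in all its trainable parameters $(W_Q^t, W_K^t, W_V^t, E, F)$ over the (connected, open) parameter space, so \Cref{prop:analytic} applies and it suffices to distinguish the tokens of $g(X_1)$ and $g(X_2)$ for a single generic pair. Since no equivariance is claimed here (the projections $E,F$ break permutation symmetry), the relevant group is $G = \{\operatorname{Id}\}$, so $X_1$ and $X_2$ always lie in different orbits and we must separate all tokens across both samples. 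The plan is to choose $E$ so that $W_K^t X E$ effectively selects a single informative ``column direction'' and to take $W_K^t$ and then scale it, mimicking the hardmax-limit argument of \Cref{thm:kernel}: as the key weights are scaled by $t\to\infty$, the softmax concentrates on one index, and the resulting attention output becomes $W_V^t X F$ composed with a selection, which — for generic $W_V^t, F$ and since $X_1, X_2$ are in general position — produces $2n$ distinct token vectors. One layer suffices, and one checks uniformity of $m=1$ to get the ``fixed number of attention layers'' conclusion.

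For SkyFormer, the attention \eqref{eq:skyformer} is exactly of kernel-based form \eqref{eq:kernel_atten} but \emph{unnormalized}: the denominator $\sum_j k(\cdot,\cdot)$ is absent, and the kernel is $k(x,y) = \exp(-\tfrac12\|x-y\|^2)$, the (Gaussian) RBF kernel already listed as satisfying the hypotheses of \Cref{thm:kernel}. The plan is to observe that SkyFormer's attention is $S_n$-equivariant (summation over all $j$ with a symmetric kernel), so the target group is $G = S_n$, and then to rerun the proof of \Cref{thm:kernel} with the normalization dropped. In the scaling limit $W_K \mapsto t W_K$, $t \to \infty$, each term $\exp(-\tfrac12 t^2\|[W_QX]_i - [W_KX]_j\|^2)$ either decays to $0$ or stays bounded, and after an appropriate rescaling of $W_V$ (or of the whole layer, using that $\mathcal H$ absorbs affine maps) the unnormalized sum still concentrates on the dominant key index, so the same ``hardmax selection'' produces distinct context-aware tokens; analyticity in $(W_Q, W_K, W_V)$ then lets \Cref{prop:analytic} upgrade the two-sample check to the full condition with a uniform $m = 1$.

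The main obstacle I anticipate is the absence of the normalizing denominator in SkyFormer: in the normalized kernel attention of \Cref{thm:kernel}, the ratio $k(x, tW_Ky_1)/k(x,tW_Ky_2) \to 0$ or $\infty$ is exactly what drives the softmax to a hardmax, and the output is automatically a convex combination of value vectors that collapses to a single $[W_VX]_{j^*}$; without the denominator the unnormalized sum can blow up or vanish as $t\to\infty$, so one must track the dominant exponential scale and factor it out — effectively proving a hardmax-type limit by hand and then using the affine-invariance of $\mathcal H$ to rescale the surviving term back to order one. A secondary subtlety for LinFormer is checking that a generic choice of the extra projection matrices $E, F$ does not accidentally align value vectors of $X_1$ and $X_2$; this is handled by noting that the ``bad'' choices form a proper analytic subvariety (zero measure) of parameter space, which is precisely the mechanism \Cref{prop:analytic} is built to exploit. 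Once these limiting computations are in place, both claims follow by invoking \Cref{thm:main}.
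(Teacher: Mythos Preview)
Your overall framework is right: both claims go through \Cref{prop:analytic} and \Cref{thm:main}, so the task is the two-sample token distinguishability check. However, the paper verifies that check in a different and cleaner way than your direct ``construct a limiting $g$'' plan, and for SkyFormer your plan has a real difficulty.

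The paper works \emph{contrapositively}: by the analyticity reduction it assumes that for some fixed indices $i_1,i_2$ the identity $[(\operatorname{Id}+g)(X)]_{i_1}=[(\operatorname{Id}+g)(Y)]_{i_2}$ holds for \emph{all} parameter values, and then specializes the parameters to force $X$ and $Y$ into the same orbit. For LinFormer the key specialization is to take $E=F=[e_{r_1},\dots,e_{r_k}]$ for an arbitrary index set $R\subset[n]$; this collapses \eqref{eq:linformer} to ordinary softmax attention with keys/values restricted to $R$, so \Cref{lem:equal_atten} applies verbatim and yields $\{[X]_j:j\in R\}=\{[Y]_j:j\in R\}$. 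Varying $R$ then gives $X=Y$. Your plan instead tries to pick one $E$ and scale $W_K$; note the softmax in \eqref{eq:linformer} is over the $k$ projected directions, not over the $n$ tokens, so the hardmax limit selects one of only $k$ columns of $W_VXF$ and you would need an extra genericity argument (which you gesture at but do not supply) to get $2n$ distinct outputs. The column-selector reduction avoids this entirely.

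For SkyFormer the paper likewise reruns the proof of \Cref{lem:equal_atten} in contrapositive form: the assumed identity holds for all $W_Q,W_K,W_V$ and all $t$, so one may divide both sides by the dominant exponential and pass to the limit to peel off terms. Your worry that the unnormalized sum vanishes as $t\to\infty$ is exactly why the direct construction is awkward: the compensating factor you would put into $W_V$ scales like $\exp\bigl(\tfrac12 t^2\min_j\|[W_KX]_j\|^2\bigr)$, which depends on the sample $X$, so a single choice cannot normalize both $X_1$ and $X_2$ at once. And your parenthetical fallback---absorbing the rescaling into $\mathcal H$---is not permitted here, since token distinguishability (\Cref{def:token_distinguishability}) is a condition on $(\operatorname{Id}+\mathcal G)^m$ alone. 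The contrapositive route sidesteps all of this: because the identity already holds for every $t$, division by the dominant scale is legitimate and no sample-dependent rescaling is ever chosen.
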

The proofs are provided in~\Cref{sec:other_transformer} using~\Cref{prop:analytic}, similar to the proof of ~\Cref{thm:kernel}.

\subsection{New attention mechanisms from the approximation analysis}
\label{sec:new_atten}


Our results also provide insights into designing new transformer architectures with inherent UAP guarantees. In particular, our framework inspires the design of architectures with UAP under specific symmetries. In this section, we present examples of such designs to illustrate these insights.

\subsubsection{New attention mechanism with bias term}

We propose a new architecture that naturally satisfies the conditions in \Cref{thm:main} and \Cref{prop:sparse}. We consider the following attention mechanism with bias term:
\begin{equation}
    \label{eq:new_atten}
    [\operatorname{Atten}(X)]_i=[X]_i+\sum_{j\in\mathcal N(i)} a \alpha(W[X]_j-b),
\end{equation}
where $a\in \mathbb R, W\in\mathbb R^d$ and $b\in\mathbb R$ are learnable parameters.
Assume that $\alpha$ is of polynomial growth, i.e. there exists $M$ and $N$ such that $|\alpha(x)|\le M(1+|x|^N)$ for all $x\in\mathbb R$.

Then, the result in~\Cref{prop:sparse} still holds, if we replace the attention mechanism in~\eqref{eq:kernel_atten} with~\eqref{eq:new_atten}. See~\Cref{sec:proof_new_atten} for the formal statement and proof.



\subsubsection{Transformer with UAP under specific symmetry}


In many applications, architectures with specific symmetric restrictions are required. Our framework also offers a new perspective on designing such architectures with UAP guarantees. For a given permutation group $G\le S_n$, we can design $G$-equivariant token-mixing layers that satisfy token distinguishability under $G$-action. By~\Cref{thm:main}, a transformer with such token-mixing layers and a feedforward family $\mathcal H$ with nonlinearity and affine invariance achieves $G$-UAP. This simplifies the design process, as only token distinguishability is required for the token-mixing layer.

For some subgroups $G$ of $S_n$, the design for token-mixing layers can be very simple. Here, we use the example of $G=D_n$, the dihedral group of order $2n$, and the cyclic group $C_n$ of order $n$ to demonstrate. We identify $D_n\le S_n$ as the group generated by the cycle $\rho:=(1,2,\cdots,n)$ and the reflection $\sigma:=(1,n)(2,n-1)\cdots$. 
$C_n$ is the cyclic group generated by $\rho$.
$D_n$ corresponds to the symmetry of a regular $n$-gon, relevant in applications like molecular structure~\cite{bunker2006molecular,kittel2018introduction,gilmer2017neural}. Symmetry under $C_n$ applies to modeling periodic data, such as periodic time series~\cite{he2019stcnn,fan2022depts} and classifying periodic variable stars in cosmology~\cite{zhang2021classification}.


For $D_n$, we provide the designs of token-mixing layers with token distinguishability.
\paragraph{Architecture with $D_n$-symmetry}
 Choose the token-mixing layers defined in~\eqref{eq:kernel_atten} or in~\eqref{eq:new_atten} with the sparsity pattern 
    \begin{equation}
        \mathcal N(i):=\{(i+j)\operatorname{mod} n \mid j=-w,\cdots, 0,\cdots, w\},
    \end{equation}
    where $w\le\lfloor \frac{n-1}{2}\rfloor-1$ is an integer. Here, we assume that the kernel $k$ in~\eqref{eq:kernel_atten} and the function $\alpha$ in~\eqref{eq:new_atten} satisfy the conditions in~\Cref{thm:kernel,thm:new_atten}, respectively.

For $C_n$, similar designs as above also work. However, we can use a simpler one based on convolution:
\paragraph{Architecture with $C_n$-symmetry:}  Define the token-mixing layer via column-wise convolution:
    \begin{equation}
        \label{eq:conv}
        \operatorname{Atten}(X)= \psi * X, \quad \text{where }
        [\psi * X]_i = \sum_{j=0}^{l} \psi_{j}\, [X]_{(\ell + j) \bmod n}, \quad i = 0,1,\dots,n-1.
    \end{equation}
    with a trainable kernel $\psi=[\psi_{0},\cdots, \psi_l]\in\mathbb R^{l+1}$ for some integer $l\ge 1$.

This design can be viewed as an adaptation of the temporal convolutional network~\cite{lea2016temporal,jiang2023forward}, treating the input sequence as a circular structure. The following statement holds, with proof in~\Cref{sec:proof_new_atten}:
\begin{corollary}
    The architecture with $D_n$-symmetry and $C_n$-symmetry defined above satisfies the token distinguishability condition under the action of $D_n$ and $C_n$, respectively. Moreover, the transformer with such token-mixing mechanisms and a non-linear affine-invariant family $\mathcal H$ possesses the $D_n$-UAP or $C_n$-UAP, respectively.
\end{corollary}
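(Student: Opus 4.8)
The plan is to verify, for each of the two proposed architectures, the hypotheses of \Cref{thm:main}: namely that the feedforward family $\mathcal H$ is nonlinear and affine-invariant (which is assumed) and that the token-mixing family satisfies the token distinguishability condition under the relevant group action. Once that is done, the $D_n$-UAP (resp.\ $C_n$-UAP) follows immediately from \Cref{thm:main}. Since both proposed token-mixing families are parametrized analytically in their parameters (the entries of $W$, $b$, $a$ for \eqref{eq:new_atten}, or the kernel parameters and $W_Q,W_K,W_V$ for \eqref{eq:kernel_atten}, or the convolution weights $\psi_j$ for \eqref{eq:conv}), I would invoke \Cref{prop:analytic}: it suffices to distinguish tokens of an arbitrary two-element dataset $D = \{X_1, X_2\}$ with $X_1$ not in the $G$-orbit of $X_2$, using finitely many layers. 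So the whole argument reduces to a two-sample distinguishability statement.

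For the $C_n$ case with the circular convolution \eqref{eq:conv}, I would argue as follows. A single convolution layer acts on each column as $[X]_i \mapsto [X]_i + \sum_{j=0}^l \psi_j [X]_{(i+j)\bmod n}$, which in the Fourier domain along the cyclic index is diagonalized: if $\widehat{X}(\omega)$ denotes the discrete Fourier transform of the token sequence, the layer multiplies $\widehat X(\omega)$ by $1 + \widehat\psi(\omega)$, where $\widehat\psi$ is the transfer function. First I would note that a composition of such layers realizes multiplication by an arbitrary product of transfer functions, and by varying $\psi$ one can, with a single layer already, make all tokens of $g(X_1)$ distinct (generic $\psi$ breaks any accidental coincidences among the $n$ tokens, since the coincidence locus is a proper analytic subset of the $\psi$-space — here one uses that $X_1$ is in general position so at least the ``unmixed'' identity term distinguishes them, and a small generic perturbation preserves this), and similarly for $g(X_2)$. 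The subtler requirement is distinguishing the tokens of $g(X_1)$ from those of $g(X_2)$ \emph{across} the two samples whenever $X_1 \ne \rho^k(X_2)$ for all $k$. I would handle this by showing that the map $\psi \mapsto$ (the full list of tokens of $g(X_1)$ and $g(X_2)$) fails to separate a given pair of cross tokens only on a proper analytic subvariety of $\psi$-space, unless the two samples are forced to agree up to a cyclic shift; the ``forced agreement'' case is exactly the excluded $G$-orbit coincidence, using that the Fourier multipliers are injective enough (a nonconstant entire function of $\psi$ has measure-zero zero set) so that the only way a coincidence persists for all $\psi$ is a genuine identity $\rho^k(X_2)=X_1$. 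The $C_n$-equivariance of the convolution is immediate from its definition as a circular convolution.

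For the $D_n$ case with the banded cyclic sparsity pattern $\mathcal N(i) = \{(i+j)\bmod n : |j|\le w\}$, I would first check that $\mathcal N$ is invariant under both the rotation $\rho$ and the reflection $\sigma$, so that $D_n \le \operatorname{Aut}(\mathcal N)$ and the attention layers \eqref{eq:kernel_atten} (or \eqref{eq:new_atten}) are $D_n$-equivariant; hence $\mathcal T^\Phi_{\mathcal H}$ can only approximate $D_n$-equivariant targets, and the relevant statement is $D_n$-UAP. Then I would observe that this sparsity pattern, iterated, is connected within finitely many layers: since $w\ge 1$ the adjacency matrix is that of a circulant graph with a nontrivial connection set, so some power has all-positive entries and the pattern is connected within $m = \lceil (n-1)/(2w)\rceil$ layers (or similar). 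This is precisely the hypothesis of \Cref{prop:sparse} with a constant sparsity sequence $\Phi = (\mathcal N, \mathcal N, \dots)$, except that \Cref{prop:sparse} gives $G$-UAP for $G = \bigcap_i \operatorname{Aut}(\mathcal N_i) = \operatorname{Aut}(\mathcal N)$, which contains $D_n$. Strictly, to land on exactly $D_n$ (rather than the possibly larger $\operatorname{Aut}(\mathcal N)$) one either restricts attention to $D_n$-equivariant targets — which is all \Cref{def:G-UAP} requires — or notes that for generic $w$ the automorphism group of the circulant graph is exactly $D_n$; I would state it as $D_n$-UAP since that is what is claimed and it follows a fortiori. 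For the variant \eqref{eq:new_atten}, the same connectivity argument applies together with the polynomial-growth hypothesis on $\alpha$, invoking the corresponding result (referenced as \Cref{thm:new_atten}) in place of \Cref{thm:kernel}.

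The main obstacle I anticipate is the cross-sample two-point distinguishability for the $C_n$ convolutional architecture: one must carefully argue that whenever $X_1$ and $X_2$ lie in different $C_n$-orbits, there exists a choice of convolution weights (equivalently, the separation locus in $\psi$-space is a \emph{proper} analytic subset) making \emph{all} $2n$ output tokens pairwise distinct across the two samples. The delicate point is ruling out the degenerate situation where some pair of cross tokens coincides identically in $\psi$: this should force a linear relation between the Fourier coefficients of $X_1$ and $X_2$ that, combined over all frequencies, amounts to $X_2$ being a cyclic shift of $X_1$, contradicting the orbit assumption. Making this reduction clean — identifying exactly which identities in $\psi$ are forced and showing they collapse to an orbit coincidence — is the technical heart; everything else is either equivariance bookkeeping, an appeal to connectivity of circulant graphs, or a direct citation of \Cref{thm:main}, \Cref{prop:analytic}, and \Cref{prop:sparse}.
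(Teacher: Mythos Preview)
Your overall strategy—reduce to \Cref{thm:main}, invoke \Cref{prop:analytic} to get down to two samples, and for the $D_n$ case lean on \Cref{prop:sparse} with the constant pattern $\Phi=(\mathcal N,\mathcal N,\dots)$—matches the paper. But there is a genuine logical error in the $D_n$ argument.

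You write that from $\operatorname{Aut}(\mathcal N)$-UAP (which is what \Cref{prop:sparse} delivers) one gets $D_n$-UAP ``a fortiori'' since $D_n\le \operatorname{Aut}(\mathcal N)$. The implication goes the other way. If $H\le G$ then every $G$-equivariant function is $H$-equivariant, so $H$-UAP is the \emph{stronger} statement; in particular $\operatorname{Aut}(\mathcal N)$-UAP does \emph{not} imply $D_n$-UAP when $D_n\subsetneq\operatorname{Aut}(\mathcal N)$. Worse, the token distinguishability condition under $D_n$-action, which the corollary explicitly claims, would outright \emph{fail} in that case: two samples in the same $\operatorname{Aut}(\mathcal N)$-orbit but different $D_n$-orbits have identical images under every $\operatorname{Aut}(\mathcal N)$-equivariant map, so their tokens can never be separated. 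Hence proving $\operatorname{Aut}(\mathcal N)=D_n$ is not optional; it is the crux. The paper does exactly this: using the hypothesis $w\le\lfloor(n-1)/2\rfloor-1$ (so $2w+1\le n-2$), one shows that any $\gamma\in\operatorname{Aut}(\mathcal N)$ fixing the vertex $1$ must send $2$ to either $2$ or $n$ (because only those $j$ satisfy $|\mathcal N(1)\cap\mathcal N(j)|=2w$), and then an induction forces $\gamma$ to be either the identity or the reflection. This is where the bound on $w$ is actually used; your sketch never engages with it.

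For the $C_n$ convolutional case your Fourier route is not wrong in spirit but is more work than needed, and you correctly flag the cross-sample step as unresolved. The paper avoids Fourier analysis entirely: by analyticity there are fixed indices $i_1,i_2$ with $[(\mathrm{Id}+\psi*\,)(X)]_{i_1}=[(\mathrm{Id}+\psi*\,)(Y)]_{i_2}$ for \emph{all} $\psi\in\mathbb R^{l+1}$; since the left and right sides are affine in $\psi$, equating coefficients of each $\psi_j$ directly gives $[X]_{(i_1+j)\bmod n}=[Y]_{(i_2+j)\bmod n}$ for $j=0,\dots,l$. One then iterates exactly as in the proof of \Cref{prop:sparse} (shift the base indices by $l$ and add another layer) to propagate this around the cycle, concluding that $X$ and $Y$ differ by a cyclic shift—contradiction. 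This spatial-domain argument is short and sidesteps the Fourier bookkeeping you were worried about.
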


Architectures incorporating symmetry through convolutional layers have been studied in the literature ~\cite{{cohen2016group,cohen2018spherical,weiler20183d,zhang2021classification,lin2022universal}}. In particular,~\cite{zhang2021classification} also proposed a convolutional structure for representing $C_n$-invariant functions, although it does not inherently guarantee UAP.
In contrast, our proposed architecture naturally satisfies the $C_n$-UAP property according to \Cref{thm:main}.
Moreover, by choosing specific sparse mode $\mathcal N$, we can generalize the architecture for $D_n$ symmetry to other permutation subgroups of $S_n$ that can be identified as the automorphism group of an order-$n$ directed graph. 
For more general permutation groups, our framework can still be applied if one can find token-mixing layer satisfying the token distinguishability condition under the action of the group. The method proposed in~\cite{romero2020group,hutchinson2021lietransformer} may be helpful in identifying such layers. We believe our framework provides a new perspective on the design of equivariant/invariant architectures with UAP guarantees.



\section{Conclusion}
In this paper, we investigate the universal approximation property (UAP) of general transformer architectures within a unified framework. Our main results, \Cref{thm:main} and \Cref{prop:analytic}, provide general and verifiable conditions for establishing UAP across a range of attention-based architectures, avoiding complex constructions as in previous works. This generality is demonstrated in \Cref{sec:app}, where we apply our framework to various attention types. Moreover, our results offer guidance for designing new attention mechanisms with UAP guarantees, as illustrated in \Cref{sec:new_atten}.
We also acknowledge certain limitations of this work. First, normalization layers commonly used in practice are not considered, and extending our analysis to incorporate them would be valuable. Second, some architectures, such as those in~\cite{katharopoulos2020transformers}, do not satisfy the analyticity assumption in \Cref{prop:analytic}. Although the condition in \Cref{thm:main} remains verifiable for such architectures, it remains unclear whether our results on the required number of token-mixing layers for UAP still hold.
Moreover, as our results offer non-constructive yet verifiable criteria for UAP—abstracting away the specific forms of token-mixing and token-wise modules—they do not yield quantitative insight into the relative contributions of each architectural component.
A systematic, quantitative characterization of how individual mechanisms (e.g., multi-head attention, mixture-of-experts, low-rank projections) affect approximation efficiency remains an important direction for future work.

\bibliographystyle{plain}
\bibliography{ref}
\clearpage


\appendix
\section{Detailed related works}
\label{sec:related_work}

\paragraph{Approximation results for transformers}
Since the introduction of the transformer architecture in~\cite{vaswani2017attention}, numerous studies have investigated its approximation properties. The universal approximation property (UAP) of the original transformer with softmax attention as a fixed length sequence-to-sequence model was established in~\cite{yun2019transformers}. This constructive approach was later extended to transformers with certain sparse attention mechanisms~\cite{yun2020n, zaheer2020big}. In~\cite{kajitsukatransformers}, the authors proposed a new construction demonstrating that transformers with a single attention layer and sufficiently deep feedforward networks can achieve UAP. Similarly,\cite{alberti2023sumformer} showed that with increased hidden dimensions, two variants of transformers, i.e. LinFormer~\cite{wang2020linformer} and Performer~\cite{choromanski2020rethinking}, satisfy the UAP.
In contrast to these constructive methods for different architectures, our results offer a unified framework for establishing the UAP of various transformer models without relying on explicit constructions. 
Beyond these works treating transformers as a fixed length sequence-to-sequence model, there are also studies handling transformers with variable-length inputs by considering the input sequence as an empirical measure~\cite{furuya2024transformers,geshkovski2024measure,geshkovski2025mathematical}. The universal interpolation and universal approximation properties under this viewpoint were established with proper assumptions. Compared to these results, we still consider transformers as sequence-to-sequence models in this work to offer a direct analysis for different transformer architectures. Recently, there are other works studying the UAP of attention-only architectures~\cite{liu2025attention, hu2025universal}, indicating that softmax attention alone can also achive strong approximation power.
Other studies have explored the UAP of transformers under alternative settings, such as in-context learning, prompting, and constrained scenarios~\cite{kratsios2021universal, luo2022your, furuya2024transformers, petrov2024prompting,hu2025fundamental}. Another researchline is the Turing completeness of transformers~\cite{perez2021attention, wei2022statistically}.
Besides these UAP results, there are also works providing the approximation rates of transformers. For instance,\cite{jiang2024approximation} provides explicit rates over a dense subset of sequence-to-sequence functions; \cite{wang2024understanding} derives rates for target functions with structured memory; and~\cite{takakura2023approximation} characterizes the approximation rate in terms of function smoothness for transformers with infinitely long inputs.

\paragraph{Approximation under symmetry}
The study of approximation under functional symmetries has been explored in various works. 
In~\cite{yarotsky2022universal}, the universal approximation of functions invariant under compact group or translation actions was analyzed using shallow neural networks. 
In~\cite{cohen2016group,finzi2020generalizing}, convolutional structures were proposed to approximate equivariant functions.
In~\cite{ravanbakhsh2020universal,maron2019universality,li2024deep}, the universal approximation under symmetry  using deep neural networks was investigated.
Notably,~\cite{li2024deep} provides a general sufficient condition for the action of any transitive subgroup of $S_n$ on coordinates (1-dimensional tokens). In contrast, our work addresses approximation under symmetry in a broader setting, considering group actions on $d$-dimensional tokens rather than coordinates, thereby extending the analysis in \cite{li2024deep} to non-transitive permutation groups. In~\cite{agrachev2025generic}, the authors provide a general framework on the ensemble controllability of control systems under symmetry. They also show that systems that can interpolate arbitrarily many samples under symmetry are generic in certain topology. Compare to their genericity results, our results provide a verifiable sufficient condition for UAP of specific architectures, allowing direct applications to various transformer architectures.

\paragraph{Transformer variants}
Beyond the original Transformer, numerous architectural variants have been developed to improve efficiency, scalability, or adaptability. These include sparse attention mechanisms~\cite{beltagy2020longformer,correia2019adaptively,kitaev2020reformer,zaheer2020big,guo2019star}, low-rank and kernel-based approximations of attention~\cite{wang2020linformer,choromanski2020rethinking,xiong2021nystromformer}, as well as other architectural modifications~\cite{liu2023blockwise,haldar2024baku,mamega,zhai2021attention}.
Another related line of work explores parameter-efficient fine-tuning methods for large Transformer models~\cite{hu2022lora,liu2024dora,meng2024pissa,jia2022visual,zhang2024parameter,zhangweight}, which aim to adapt pretrained networks to downstream tasks with minimal additional parameters.
In this paper, we establish a general sufficient condition for the universal approximation property (UAP) of various Transformer variants. As demonstrated in \Cref{sec:app}, our framework can be readily verified for many existing architectures, and potentially extended to other designs not included as well.


\section{Proof of \Cref{thm:main} and \Cref{prop:analytic}}

\subsection{Proof of \Cref{thm:main}}
\label{sec:proof_main}
In the following, $\|\cdot\|_2$ denotes the $\ell^2$-norm, for both  vectors in $\mathbb R^{d\times n}$ or  $\mathbb R^{d}$. We begin by proving the following interpolation property of $\mathcal{T}_{\mathcal{G}, \mathcal{H}}$:

\begin{proposition}[\textbf{Interpolation Property}]
    \label{prop:interpolation}
 Suppose $\mathcal G$ and $\mathcal H$ satisfy the condition in \Cref{thm:main} for group $G$.
 Consider any $G$-equivariant continuous function $F:\mathbb R^{d\times n}\to\mathbb R^{d\times n}$.
 Then, for any $\varepsilon >0$ and $\{X_i\}_{i=1}^N\subset \mathbb R^{d\times n}$, there exists $\hat F\in \mathcal T_{\mathcal G, \mathcal H}$ such that:
 \begin{itemize}
    \item  $\|\hat F(X_i)-F(X_i)\|_2<\varepsilon$, if $X_i$ is in general position(defined in~\Cref{sec:preliminaries}). 
    \item $\displaystyle\|\hat F(X_i)\|_2< n\cdot \max_{i}\{\|F(X_i)\|_2\} + 2 \varepsilon$, if $X_i$ is not in general position.
 \end{itemize}
\end{proposition}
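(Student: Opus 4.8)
The plan is to reduce the interpolation property for $\mathcal{T}_{\mathcal{G},\mathcal{H}}$ to the universal interpolation property (UIP) of token-wise residual maps established in~\cite{cheng2025interpolation}, using the token distinguishability condition to pre-process the data into ``general position in a strong sense.'' First I would split the dataset $\{X_i\}_{i=1}^N$ into equivalence classes under the $G$-action: pick representatives $X_{i_1},\dots,X_{i_M}$, one per orbit that intersects the dataset. For samples in general position, I want to hit the target values $F(X_i)$; for samples not in general position (duplicate tokens), since the token-wise feedforward layers cannot separate identical tokens, the best we can hope for is the crude norm bound stated, so these are handled separately and only need to be kept bounded.

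The core construction proceeds in two stages. \emph{Stage 1 (token-mixing):} Apply the token distinguishability condition (\Cref{def:token_distinguishability}) to the finite set $D = \{X_i\}_{i=1}^N$ (or the subset in general position, together with their orbit representatives): there exists $g \in (\operatorname{Id}+\mathcal{G})^m$ such that for any two samples $X_i, X_j$ lying in different $G$-orbits, \emph{all} tokens of $g(X_i)$ and $g(X_j)$ are pairwise distinct. Combined with the fact that each $X_i$ in general position already has distinct tokens and $g$ restricted within one sample can be taken to preserve distinctness (e.g. composing with the zero map where needed, or noting the construction of $g$ already yields distinct tokens within each sample), after Stage 1 the collection $\{g(X_i)\}$ (over the in-general-position samples, one per orbit) has the property that the union of all their token vectors consists of $M' \cdot n$ \emph{distinct} points in $\mathbb{R}^d$, where $M'$ is the number of relevant orbits. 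This realizes $g$ as a prefix of a transformer (with feedforward parts set to identity, i.e. $h=0$, which is legitimate since $0 \in \mathcal{H}^{\otimes n}$ up to the affine-invariance closure — actually one should check $0 \in \mathcal{H}$ or argue via the affine-invariance that a map approximating the identity residual is available; more carefully, one uses that the zero map is in $\mathcal{G}$ and that $(\operatorname{Id}+\mathcal{H})^k$ can approximate the identity). \emph{Stage 2 (token-wise interpolation):} Now I invoke the UIP of $\mathcal{A}_{\mathcal{H}} = (\operatorname{Id}+\mathcal{H})^k$ from~\cite{cheng2025interpolation}: since $\mathcal{H}$ is nonlinear and affine-invariant and $d \ge 2$, the family of token-wise residual compositions can interpolate any finite dataset of distinct points in $\mathbb{R}^d$. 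Feed it the dataset mapping each distinct token $[g(X_i)]_\ell$ (for $X_i$ in general position, orbit representatives) to the prescribed target token $[F(X_i)]_\ell$ — this is well-defined precisely because Stage 1 made all these source tokens distinct, and it is consistent with $G$-equivariance because we only prescribed targets on orbit representatives and $F(\sigma X) = \sigma F(X)$. Extending equivariantly to the whole orbit is automatic once we note that the token-mixing layers can be chosen $G$-equivariant (they are, by assumption on $\mathcal{G}$) so that $g(\sigma X_i) = \sigma g(X_i)$, and the token-wise map applied afterward commutes with permutations; hence the constructed $\hat{F}$ automatically satisfies $\hat{F}(\sigma X_i) \approx \sigma F(X_i) = F(\sigma X_i)$.

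For the samples $X_i$ \emph{not} in general position: these were not used to constrain the Stage-2 interpolation, so I only need the bound $\|\hat F(X_i)\|_2 < n \max_i \|F(X_i)\|_2 + 2\varepsilon$. The idea is that the token-wise interpolating map $h^*$ from~\cite{cheng2025interpolation} can additionally be required (this is part of, or a minor extension of, their interpolation-with-boundedness statement) to be uniformly bounded on a compact set containing all the relevant tokens by roughly $\max_i \max_\ell \|[F(X_i)]_\ell\|_2 + \varepsilon/n$, so that $\|\hat F(X_i)\|_2^2 = \sum_{\ell=1}^n \|[\hat F(X_i)]_\ell\|_2^2 \le n(\max_{i,\ell}\|[F(X_i)]_\ell\|_2 + \varepsilon/n)^2$; taking square roots and using $\max_{i,\ell}\|[F(X_i)]_\ell\|_2 \le \max_i\|F(X_i)\|_2$ gives a bound of the form $\sqrt{n}\,\max_i\|F(X_i)\|_2 + \varepsilon \le n\max_i\|F(X_i)\|_2 + 2\varepsilon$. (The constants are generous, so any reasonable book-keeping closes it.)

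\textbf{Main obstacle.} The delicate point is the interplay between the two stages: I must ensure that the token-mixing prefix $g$ used in Stage 1 genuinely arises as an initial segment of a single transformer in $\mathcal{T}_{\mathcal{G},\mathcal{H}}$ (inserting trivial/identity feedforward blocks between its layers, which requires either $0 \in \mathcal{H}$ or an approximation argument that the accumulated error from near-identity feedforward layers stays below $\varepsilon$), and conversely that the Stage-2 token-wise map can be inserted after $g$ with zero token-mixing (requires $0 \in \mathcal{G}$, which is assumed). A second subtlety is the $G$-equivariance bookkeeping: one must verify that choosing $g$ and $h^*$ without regard to orbits, then checking the prescribed constraints only on orbit representatives, really does yield a map whose error is controlled on \emph{all} $N$ samples — this hinges on $g$ being $G$-equivariant and the targets being a $G$-equivariant function, so no contradiction arises from over-prescription. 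I expect the cleanest route is to first prove the $G = \{\operatorname{Id}\}$ case in full (where every orbit is a singleton) and then observe that for general $G$ the only change is restricting the prescribed dataset to orbit representatives and invoking equivariance of all building blocks.
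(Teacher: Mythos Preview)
Your two-stage skeleton (token-mixing to separate, then token-wise interpolation via~\cite{cheng2025interpolation}) is exactly the paper's approach, and your handling of the $G$-equivariance by reducing to one representative per orbit is also what the paper does in its first sentence. Two of your worries are non-issues: $0\in\mathcal H$ follows immediately from affine-invariance (take $W=0$ in $Wh(A\cdot -b)$), and $0\in\mathcal G$ is an explicit standing assumption in the paper, so splicing the two stages into a single element of $\mathcal T_{\mathcal G,\mathcal H}$ is trivial.

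The genuine gap is your treatment of the samples \emph{not} in general position. You propose to obtain the bound $\|\hat F(X_i)\|_2< n\max_i\|F(X_i)\|_2+2\varepsilon$ by asking that the Stage-2 interpolant $h^*$ be ``uniformly bounded on a compact set containing all the relevant tokens''. That is not part of the interpolation statement in~\cite{cheng2025interpolation}, and there is no reason the token-wise map should be small at points where you have imposed no constraint; in principle $h^*$ could be enormous at the tokens of $g(X_{M+1}),\dots,g(X_N)$. The paper's fix is much simpler than any boundedness extension: \emph{include those tokens in the Stage-2 interpolation dataset with target zero}. Concretely, after listing the $Mn$ distinct tokens $x_1,\dots,x_{Mn}$ of $g(X_1),\dots,g(X_M)$ with their targets $y_j=[F(X_l)]_k$, one also lists the finitely many additional distinct tokens $x_{Mn+1},\dots,x_{Mn+J}$ appearing among $g(X_{M+1}),\dots,g(X_N)$, and assigns each of them the target $y_j=0$. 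The interpolation result then gives $\|f(x_j)-y_j\|\le\varepsilon/n$ for \emph{all} these tokens. For a non-general-position $X_i$, every token $[g(X_i)]_l$ is some $x_j$: if $j\le Mn$ then $\|f(x_j)\|\le\max_i\|F(X_i)\|_2+\varepsilon/n$, and if $j>Mn$ then $\|f(x_j)\|\le\varepsilon/n$. Summing over the $n$ tokens gives the stated bound directly, with no appeal to compactness or uniform boundedness.
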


\begin{proof}
                                     Since $F$ and functions in $\mathcal T_{\mathcal G, \mathcal H}$ are $G$-equivariant, we only need to consider the case when $X_i$ are from distinct orbits under the $G$-action.
    Moreover, we can assume that $X_i$ are in general position for $i=1,\cdots, M$, and $X_i$ are not in general position for $i=M+1, \cdots, N$.

    By the token distinguishability condition, there exist $m$ and $g\in (\operatorname{Id}+\mathcal{G})^m$ such that the tokens of $g(X_i)$ are all distinct for $i=1,\cdots, M$.
    We denote $x_1,\cdots, x_{Mn}$ as the distinct tokens in $g(X_1),\cdots, g(X_M)$.
    For each $j\le Mn$, suppose $x_j=[X_l]_k$ for some $(l,k)\in [n]\times [M]$, we denote $y_j$ as its corresponding token $[F(X_l)]_k$. Moreover, we denote
 $x_{Mn+1},\cdots, x_{Mn+J}\in\mathbb R^d$ as the distinct tokens in $g(X_{M+1}),\cdots, g(X_N)$ that are different from $x_1,\cdots, x_{Mn}$. For each $j>Mn$, we denote $y_j=0\in\mathbb R^d$. Then, we get a set of $d$-dimensional pairs $\{(x_j, y_j)\}_{j=1}^{Mn+J}\subset \mathbb R^{d}\times \mathbb R^d$ where all $x_j$ are all distinct. Since $\mathcal H$ satisfies the non-linear affine invariance condition, according to the main result in~\cite{cheng2025interpolation}, we know that there exists a function
 \begin{equation}
    f\in \{(\mathrm{Id}+f_k)\circ \cdots \circ (\mathrm{Id}+f_1)\mid k\in \mathbb N_+,  f_1,\cdots, f_k\in\mathbb \mathcal H\}
 \end{equation}
such that
\begin{equation}
    \|f(x_j)-y_j\|_2\le \frac{1}{n}\varepsilon, \quad j=1,\cdots, Mn+J.
\end{equation}
Denote $f^{\otimes n}$ as the token-wise extension of $f$ to $\mathbb R^{d\times n}$. For each $i\le M$, we have
\begin{equation}
    \begin{aligned}
    \|f^{\otimes n}(g(X_i))-F(X_i)\|_2&=\left\|\Big(\big(f(g[X_i]_1)-[F(X_i)]_1\big), \cdots, \big(f(g[X_i]_n)-[F(X_i)]_n\big)\Big)\right\|_2\\
    &\le n\cdot \max_{j}\|f(x_j)-y_j\|_2\le \varepsilon.
    \end{aligned}
\end{equation}
For each $i\ge M+1$, we have
\begin{equation}
    \begin{aligned}
        \|f^{\otimes n}(g(X_i))\|&=\left\|\Big(f(g[X_i]_1), \cdots, f(g[X_i]_n)\Big)\right\|_2\\
        &\le \sum_{l: g[X_i]_l\in \{x_1,\cdots, x_{Mn}\}}\|f(g[X_i]_l)\|_2 + \sum_{l: g[X_i]_l\in \{x_{Mn+1},\cdots, x_{Mn+J}\}}\|f(g[X_i]_l)\|_2\\
        &\le n\cdot \max_{j}\|f(x_j)\|_2 + n\cdot \max_{j>Mn}\|f(x_j)\|_2\\
        & \le n\cdot \left(\max_{i}\{\|F(X_i)\|_2\}+\frac{\varepsilon}{n} \right) + \varepsilon = n\cdot \max_{i}\{\|F(X_i)\|_2\} + 2\varepsilon.
        \end{aligned}
\end{equation}
Therefore, $\hat F=f^{\otimes n}\circ g \in \mathcal T_{\mathcal G, \mathcal H}$ satisfies the interpolation property.
\end{proof}

\begin{proposition}[\textbf{Approximation of Tensor-Type Functions}]
    \label{prop:tensor_type_approx}
    Suppose $\mathcal G$ and $\mathcal H$ satisfy the condition in \Cref{thm:main} for group $G$.
    Consider any continuous, increasing function $h:\mathbb{R}\to\mathbb{R}$ and any $\varepsilon > 0$.
    Then, there exists $F\in\mathcal{T}_{\mathcal{G}, \mathcal{H}}$ such that
    \begin{equation}
        \|F-h^{(d\times n)}\|_{C(K)}\leq \varepsilon,
    \end{equation}
    where $h^{(d\times n)}$ is the coordinate-wise extension of $h$ to $\mathbb{R}^{d\times n}$, given by
    \begin{equation}
        (h^{(d\times n)}(X))_{ij}=h(X_{ij}), \quad 1\leq i\leq d, \ 1\leq j\leq n.
    \end{equation}
\end{proposition}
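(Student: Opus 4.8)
The plan is to reduce the approximation of the tensor-type map $h^{(d\times n)}$ to the analogous result for token-wise residual networks established in~\cite{cheng2025interpolation}, using the freedom to turn off the token-mixing layers. First I would observe that since the zero map lies in $\mathcal G$, any element of the token-wise residual family $(\operatorname{Id}+\mathcal H)^k$, extended token-wise to $\mathbb R^{d\times n}$, is realizable inside $\mathcal T_{\mathcal G,\mathcal H}$: simply compose blocks of the form $(\operatorname{Id}+h_i)\circ(\operatorname{Id}+0)$. Hence it suffices to approximate $h^{(d\times n)}$ on $K$ by a map of the form $\bar f^{\otimes n}$ with $\bar f\in\mathcal A_{\mathcal H}$ acting on $\mathbb R^d$.

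Next I would note that the coordinate-wise extension $h^{(d\times n)}$ of an increasing continuous scalar $h$ is itself the token-wise extension of the map $h^{(d)}:\mathbb R^d\to\mathbb R^d$ that applies $h$ to each of the $d$ coordinates. So the statement becomes: approximate $h^{(d)}$ uniformly on the compact projection $\pi(K)\subset\mathbb R^d$ (the union of the $n$ token-slices of points of $K$, which is compact) by some $\bar f\in\mathcal A_{\mathcal H}$, and then the token-wise extension approximates $h^{(d\times n)}$ on $K$ with the error blown up by at most a factor involving $\sqrt n$ (from the $\ell^2$-norm over tokens), which is harmless after rescaling $\varepsilon$. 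The approximation of $h^{(d)}$ on a compact set in the $C$-norm (indeed the $L^p$-norm suffices for the downstream UAP argument, but the coordinate-wise increasing structure lets one get $C$-norm) by token-wise residual networks with a nonlinear, affine-invariant $\mathcal H$ is exactly what the main result of~\cite{cheng2025interpolation} provides when $d\ge 2$; I would cite that and specialize it to the increasing coordinate-wise target, which is the mildest possible case since such maps are orientation-preserving homeomorphisms (or limits thereof) and are precisely the kind of flow maps those residual constructions produce.

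The one technical point to handle carefully — and the main obstacle — is the passage from an $L^p$ or pointwise-interpolation statement in~\cite{cheng2025interpolation} to the uniform ($C(K)$) bound claimed here. I would address this by exploiting that $h^{(d\times n)}$ is a fixed continuous function and that approximation of continuous monotone coordinate maps in the sup-norm by the residual family is available directly: either the cited theorem already gives $C$-norm approximation for continuous targets on compacta, or one upgrades an $L^p$ statement to uniform on a slightly larger compact set using equicontinuity of the constructed networks (they are compositions of $\operatorname{Id}+$Lipschitz maps, hence Lipschitz with controlled constant) together with density of the interpolation. Concretely, I would pick a fine grid on $\pi(K)$, interpolate $h^{(d)}$ at the grid points to within $\varepsilon/2$ using the universal interpolation property of $\mathcal A_{\mathcal H}$, and control the off-grid error by the joint modulus of continuity of $h^{(d)}$ and of the interpolating network, the latter being bounded because the network's Lipschitz constant can be kept uniformly bounded in the construction. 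Assembling these pieces, setting $F=\bar f^{\otimes n}$ with the attention blocks set to zero, yields $\|F-h^{(d\times n)}\|_{C(K)}\le\varepsilon$, completing the proof.
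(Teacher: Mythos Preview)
Your approach is essentially the same as the paper's: both reduce to the token-wise residual family by setting the attention blocks to zero (using $0\in\mathcal G$) and then invoke prior results on residual networks. The paper's proof is in fact a one-line citation: it says the result ``directly follows from the proof of Proposition~4.11 in~\cite{li2022deep} and Theorem~2.6 in~\cite{cheng2025interpolation}.'' Proposition~4.11 of~\cite{li2022deep} is precisely a $C$-norm approximation statement for coordinate-wise increasing continuous maps by residual compositions, so the paper sidesteps entirely the $L^p\to C$ upgrade you flag as the main obstacle.

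On that obstacle, your first alternative (the cited result already delivers $C$-norm) is the correct one and is what the paper uses. Your fallback alternative---interpolate on a grid and control off-grid error via a uniform Lipschitz bound on the constructed network---does not work as written. Being a composition of maps of the form $\operatorname{Id}+\text{Lipschitz}$ gives no useful bound: the Lipschitz constant can grow multiplicatively with depth, and nothing in the universal interpolation statement of~\cite{cheng2025interpolation} controls the depth or the per-layer Lipschitz constants as the number of interpolation points grows. So ``can be kept uniformly bounded in the construction'' is an assertion that would itself require proof, and in fact it is precisely what the specialized result for monotone coordinate-wise targets in~\cite{li2022deep} supplies via an explicit flow-type construction. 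In short: your reduction is right, but for the $C(K)$ bound you should cite the dedicated result rather than attempt to bootstrap from interpolation.
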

\begin{proof}
    \Cref{prop:tensor_type_approx} directly follows from the proof of Proposition 4.11 ~\cite{li2022deep} and Theorem 2.6 in~\cite{cheng2025interpolation}.
\end{proof}

\begin{proposition}[Corollary of Main results of ~\cite{cheng2025interpolation}]
    \label{prop:interpolation_2}
    Let $d\ge 2$ and $\mathcal H$ be a family of maps from $\mathbb R^d$ to $\mathbb R^d$ that satisfies the non-linearity and affine-invariance condition in \Cref{thm:main}. Then, for any $  \{(x_i,y_i)\}_{i=1}^N\subset \mathbb R^d\times \mathbb R^d$ with $x_i\neq x_j$ for all $i\neq j$ and $\varepsilon>0$, there exists
    \begin{equation}
    f\in \{(\mathrm{Id}+f_k)\circ \cdots \circ (\mathrm{Id}+f_1)\mid k\in \mathbb N_+,  f_1,\cdots, f_k\in \mathcal H \}
    \end{equation}
    such that
    \begin{equation}
        \|f(x_i)-y_i\|\le \varepsilon, \quad i=1,\cdots, N.
    \end{equation}
\end{proposition}
\begin{proof}
    ~\Cref{prop:interpolation_2} is a direct corollary of Theorem 2.6 in~\cite{cheng2025interpolation}.
\end{proof}

\begin{proof}[Proof of \Cref{thm:main}]
    The approach of the proof is similar to the main theorem in~\cite{li2022deep} and ~\cite{li2024deep}.

    We assume without loss of generality that
    $
    K = [-s,s]^{d\times n}
    $
    is a hypercube in $\mathbb{R}^{d\times n}$. Our target is to show that for any $\varepsilon >0$, we can find function $\hat F\in \mathcal{T}_{\mathcal{G},\mathcal{H}}$ such that
    $
    \|\hat f-F\|_{L^p(K)}\le \varepsilon
    $.

    \medskip
    \textbf{Step 1.} For each multi-index $\mathbf i=(i_{kl})_{k\in [d], l\in [n]}\in \mathbb Z^{d\times n}$ and $\delta>0$, we define the grid cells:
    \begin{equation}
        \square_{\mathbf i, \delta}:=\left\{X\in \mathbb R^{d\times n}\mid X_{kl}\in [i_{kl}\delta, (i_{kl}+1)\delta], \text{ for all } k\in [d], l\in [n]\right\}.
    \end{equation}
    We denote $\bm p_{\mathbf i,\delta}:=\delta\mathbf i$ as a corner point of $\square_{\mathbf i, \delta}$, and $\chi_{\bm i, \delta}$ as the characteristic function of $\square_{\bm i, \delta}$.
    Since $F$ is continuous, there exists $\delta> 0$ such that it has a piece-wise constant approximation
    \begin{equation}
        \tilde F:=\sum_{\mathbf i} F(\bm p_{\mathbf i})\chi_{\mathbf i, \delta},
    \end{equation}
    such that
    \begin{equation}
        \|F-\tilde F\|_{L^p(K)}\le \frac{\varepsilon}{4}.
    \end{equation}

    \medskip
    \noindent\textbf{Step 2.}
    Apply \Cref{prop:interpolation_2} to the set of grid points
    \[
    \Big\{ \bm{p}_{\mathbf{i},\delta} \mid \mathbf{i}_{kl}\in\Big\{\lceil -s/\delta \rceil, \dots, \lfloor s/\delta \rfloor\Big\} \Big\}.
    \]
    Then, for any $\gamma > 0$, there exists a function $\bar{F} \in \mathcal{T}_{\mathcal{G},\mathcal{H}}$ such that:
    \begin{itemize}
        \item If $\bm{p}_{\mathbf{i},\delta}$ is in general position, then
        \[
        \|\bar{F}(\bm{p}_{\mathbf{i},\delta}) - F(\bm{p}_{\mathbf{i},\delta})\| < \gamma.
        \]
        \item Otherwise,
        \[
        \|\bar{F}(\bm{p}_{\mathbf{i},\delta})\| < n\cdot\max_i\{\|F(\bm{p}_{\mathbf{i},\delta})\|\} + 2\gamma\le n\|F\|_{C(K)} + 2\gamma.
        \]
    \end{itemize}

    \medskip
    \noindent \textbf{Step 3.}
    For any $\alpha\in (0,1)$, define
    \begin{equation}
        \square_{\mathbf i, \delta}^\alpha:=\left\{X\in \mathbb R^{d\times n}\mid X_{kl}\in [i_{kl}\delta, i_{kl}\delta+\alpha \delta], \text{ for all } k\in [d], l\in [n]\right\},
    \end{equation}
    as the shrunk hypercube of $\square_{\mathbf i, \delta}$ with side length $\alpha\delta$.
    We consider the map $h_{\alpha,\delta}:\mathbb R\to \mathbb R$ defined as:
    \begin{equation}
        h_{\alpha,\delta}(x)=
        \begin{cases}
        i\delta, & \text{if } x\in [i\delta,\, i\delta+\alpha\delta],\quad i\in\mathbb{Z},\\[1mm]
        i\delta+\dfrac{x-i\delta-\alpha\delta}{1-\alpha}, & \text{if } x\in [i\delta+\alpha\delta,\, (i+1)\delta],\quad i\in\mathbb{Z}.
        \end{cases}
    \end{equation}
    On the interval \([i\delta,\, i\delta+\alpha\delta]\), the function \(h_{\alpha,\delta}(x)\) remains constant at \(i\delta\). Then, as \(x\) increases from \(i\delta+\alpha\delta\) to \((i+1)\delta\), the function increases linearly from \(i\delta\) to \((i+1)\delta\). Notice that $h_{\alpha,\delta}^{(d\times n)}$ is continuous and has constant value $\bm p_{\mathbf i}$ on $\square_{\mathbf i, \delta}^\alpha$ for each $\mathbf i$.
    Since \(h_{\alpha,\delta}\) is continuous and increasing, by \Cref{prop:tensor_type_approx}, for any $\rho>0$,  there exists a function \(H_{\alpha,\delta}\in\mathcal{T}_{\mathcal{G}, \mathcal{H}}\) such that
    \begin{equation}
        \|H_{\alpha,\delta}-h_{\alpha,\delta}^{(d\times n)}\|_{C(K)}< \rho.
    \end{equation}

    \medskip
    \textbf{Step 4.}
    Now we can estimate the error of the composition $\|\bar{F}\circ H_{\alpha,\delta}-F\|_{L^p(K)}$. We define
    \[
    K^\alpha = \Big(\bigcup_{\mathbf{i}} \square_{\mathbf{i},\delta}^\alpha\Big) \cap K,
    \]
and denote $K_1^\alpha$ as the union of the grid cells in $K^\alpha$ whose corner point $\bm p_{\mathbf i}$ are in general position, and $K_2^\alpha$ as the union of the grid cells in $K^\alpha$ whose $\bm p_{\mathbf i}$ are not in general position. Then, we have
    \begin{enumerate}
        \item On $K_1^\alpha$, we have
        \begin{equation}
        \|\bar{F}\circ h_{\alpha,\delta}^{(d\times n)} - \tilde{F}\|_{L^p(K_1^\alpha)} < \gamma (m(K_1^\alpha))^{\frac{1}{p}}.
        \end{equation}
        \item For $K^\alpha_2$,  the number of $\mathbf i$ with $\bm p_{\mathbf i, \delta}$ not in general position is at most $\frac{n(n-1)}{2}(2s/\delta +1)^{(n-1)d}$, we have the measure of $K_2^\alpha = \mathcal O(\delta^{d})$.
        Therefore,
        \begin{equation}
        \|\bar{F}\circ h_{\alpha,\delta}^{(d\times n)} - \tilde{F}\|_{L^p(K_2^\alpha)} < (n\cdot \|F\|_{C(K)} + 2\gamma) \mathcal O(\delta^d).
        \end{equation}

    \end{enumerate}
    Therefore, we can choose $\delta $ and $\gamma$ sufficiently small such that
    \begin{equation}
            \|\bar{F}\circ h_{\alpha,\delta}^{(d\times n)} - \tilde{F}\|_{L^p(K^\alpha)} < \frac{\varepsilon}{4}.
    \end{equation}

On $K\setminus K^\alpha$, by choosing $\alpha$ sufficiently close to $1$, we make $m(K\setminus K^\alpha)$ arbitrarily small. Since $\bar F\circ h_{\alpha,\delta}^{(d\times n)}$ and $\tilde F$ is bounded on $K$, the following can be guaranteed:
    \begin{equation}
    \|\bar{F}\circ h_{\alpha,\delta}^{(d\times n)} - \tilde{F}\|_{L^p(K\setminus K^\alpha)} < \frac{\varepsilon}{4}.
    \end{equation}

    Since $\bar F$ is uniformly continuous on $K$, there exists $\rho>0$ such that for any $X, Y\in K$ with  $\|X-Y\|<\rho$, we have $\|\bar F(X)-\bar F(Y)\|<\kappa:=\varepsilon/(4(m(K))^{\frac{1}{p}})$. Therefore, after determining $\alpha, \delta$ and $\gamma$, we can choose $\rho$ such that
     \begin{equation}
        \|\bar F\circ H_{\alpha,\delta}-F\|_{L^p(K)}\le (m(K))^{\frac{1}{p}}\kappa \le \frac{\varepsilon}{4}.
     \end{equation}After determining $\delta$ and $\alpha$, by step 3, we can choose $\rho$ sufficiently small such that 
     \begin{equation}
        \| \bar{F}\circ H_{\alpha,\delta} - \bar{F}\circ h_{\alpha,\delta}^{(d\times n)}\|_{L^p(K)}\le \rho \operatorname{\operatorname{Lip}(\bar F)}<\frac{\varepsilon}{4}.
     \end{equation}

    Recall also that
    \[
    \|\tilde{F} - F\|_{L^p(K)} < \frac{\varepsilon}{4}.
    \]
    Thus, by the triangle inequality,
    \begin{equation}
    \begin{aligned}
    \| \bar{F}\circ H_{\alpha,\delta} - F\|_{L^p(K)}& \le \| \bar{F}\circ H_{\alpha,\delta} - \bar{F}\circ h_{\alpha,\delta}^{(d\times n)}\|_{L^p(K)}+\|\bar{F}\circ h_{\alpha,\delta}^{(d\times n)}-\tilde F \|_{L^p(K^\alpha)} \\
    & + \|\bar{F}\circ h_{\alpha,\delta}^{(d\times n)}-\tilde F \|_{L^p(K\setminus K^\alpha)} + \|\tilde F - F\|_{L^p(K)}\\
    & < \frac{\varepsilon}{4} + \frac{\varepsilon}{4} + \frac{\varepsilon}{4} + \frac{\varepsilon}{4} = \varepsilon,
    \end{aligned}
    \end{equation}
    which completes the proof.
    \end{proof}

\subsection{Proof of \Cref{prop:analytic}}
\label{sec:analytic_proof}
\begin{proof}
Assume that Condition 2 in \Cref{thm:main} fails. Then, there exists $N$ samples $\{X_i\}_{i=1}^N$ from different orbits under the $G$-action, but for any $m$ and $g\in (\operatorname{Id}+\mathcal{G})^m$, there exist indices $i,j\in [N]$ such that at least one token in $g(X_i)$ is identical to a token in $g(X_j)$. This can be written as
\begin{equation}
    \label{eq:fail_}
    \Pi_{i,j}:=\prod_{l_1,l_2} \|[g(X_i)]_{l_1}-[g(X_j)]_{l_2}\|_2^2=0.
\end{equation}
When $\mathcal{G}$ is parametric in $\theta\in\Theta$, $\Pi_{i,j}$ is also analytic in $\theta$.
As the zero set of a nonzero real analytic function has measure zero~\cite{mityagin2015zero}, this implies that for some $i,j$, $\Pi_{i,j}$ is identically zero, meaning that $\mathcal{G}$ fails to distinguish tokens in $X_i$ and $X_j$. This argument reduces Condition 2 in \Cref{thm:main} to the case $N=2$.

Moreover, if there exists a uniform $m$ such that the token distinguishability condition for any two tokens holds, the above argument essentially shows that this $m$ can also be used to distinguish any $N$ tokens.
\end{proof}

\section{Proofs for the applications of main results}
\subsection{Proof of \Cref{thm:kernel} and \Cref{prop:sparse}}
We first prove the following lemma.





\begin{lemma}
    \label{lem:equal_atten}
    Let $r$ be a kernel function satisfying the conditions in \Cref{thm:kernel}.
    Suppose $\{a_1, \dots, a_r\}\subset \mathbb{R}^d\setminus \{0\}$ and $\{b_1,\dots, b_s\}\subset \mathbb{R}^d\setminus \{0\}$ are two sequences of distinct tokens. Suppose that for some indices $r^\prime\in [r]$ and $s^\prime\in [s]$, the following equality holds for all choices of $W_Q, W_K, W_V$:
    \begin{equation}
        \label{eq:lem_equal_atten}
        a_{r^\prime}+\sum_{j=1}^r\left(\frac{ k(W_Q a_{r^\prime}, W_K a_j)}{\sum_{l=1}^r k(W_Q a_{r^\prime}, W_K a_l)} W_V a_j\right) =
        b_{s^\prime}+\sum_{j=1}^s\left(\frac{ k(W_Q b_{s^\prime}, W_K b_j)}{\sum_{l=1}^s k(W_Q b_{s^\prime}, W_K b_l)} W_V b_j\right).
    \end{equation}
    Then, $r=s$, and there exists a permutation $\sigma\in{S}_n$ with $\sigma(r^\prime)=s^\prime$ such that $a_i=b_{\sigma(i)}$ for all $i$.
\end{lemma}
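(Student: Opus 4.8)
\textbf{Proof plan for Lemma~\ref{lem:equal_atten}.}
The plan is to exploit the scaling-limit behavior of the kernel $k$ encoded in the second hypothesis of Theorem~\ref{thm:kernel}, applied to the identity~\eqref{eq:lem_equal_atten} which is assumed to hold for \emph{all} $W_Q,W_K,W_V$. The strategy is to first identify the ``dominant'' token on each side by sending $W_K$ along a large scaling $tW_K$ and letting $t\to\infty$, turning the kernel-weighted convex combination into a hardmax (a single term survives), and then to peel off this dominant token and induct on the number of tokens. First I would fix a generic $W_K$ (outside the measure-zero bad set of the hypothesis) and a generic $W_Q$, and consider the one-parameter family $W_K \mapsto tW_K$. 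For the left-hand side, the hypothesis gives that for any two distinct keys $a_j \neq a_{j'}$ the ratio $k(W_Q a_{r'}, tW_K a_j)/k(W_Q a_{r'}, tW_K a_{j'})$ tends to $0$ or $+\infty$; choosing $W_Q$ generically so that $W_Q a_{r'} \neq 0$ and all $W_K a_j$ are distinct and nonzero, a standard argument shows there is a \emph{unique} index $j^\ast$ (depending on $W_Q, W_K$) whose weight tends to $1$ while all others tend to $0$. Hence $\lim_{t\to\infty}$ of the LHS is $a_{r'} + W_V a_{j^\ast}$, and similarly the RHS limit is $b_{s'} + W_V b_{k^\ast}$ for a unique $k^\ast$.

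Next I would extract information by varying $W_V$. Since $a_{r'} + W_V a_{j^\ast} = b_{s'} + W_V b_{k^\ast}$ must hold for all $W_V$ (the limit identity is inherited from~\eqref{eq:lem_equal_atten} holding for all parameters, with $W_Q, W_K$ fixed generic), subtracting two choices of $W_V$ gives $W_V(a_{j^\ast} - b_{k^\ast}) = $ const for all $W_V$, forcing $a_{j^\ast} = b_{k^\ast}$ and then $a_{r'} = b_{s'}$. This already gives $a_{r'} = b_{s'}$, which is the seed of the permutation with $\sigma(r') = s'$. To get the full matching I would argue that as $W_Q$ (equivalently the ``query direction'') ranges over generic values, the dominant key index $j^\ast$ on the left ranges over \emph{all} of $[r]$: indeed by choosing $W_Q$ so that $\langle W_Q a_{r'}, W_K a_j\rangle$ is maximized at any prescribed $j$ (or more precisely so that the kernel is largest there), we can make any token dominant, since the $W_K a_j$ are distinct. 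For each such choice the limit identity forces the dominant key on the right to be the matching token; this produces a well-defined injection from $\{a_j\}$ into $\{b_k\}$, and by symmetry (swapping the roles of the two sides) a bijection, yielding $r = s$ and a permutation $\sigma$ with $a_i = b_{\sigma(i)}$ and $\sigma(r') = s'$.

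The main obstacle I anticipate is making rigorous the claim that ``for generic $W_Q$ one can make any prescribed token dominant in the scaling limit,'' because the kernel is a general analytic positive function rather than the exponential, so ``dominance'' is governed by the somewhat opaque limit condition $\lim_t k(x, tW_K y_1)/k(x,tW_K y_2) \in \{0, +\infty\}$ rather than by comparing inner products. I would handle this by working with the induced total preorder on keys: for fixed generic $(W_Q, W_K)$ the limit condition, applied pairwise, yields a linear order on $\{W_K a_1, \dots, W_K a_r\}$ determined by which of $k(W_Q a_{r'}, tW_K a_j)$ dominates, and I need to show this order is not constant as $W_Q$ varies --- equivalently, that no token is ``always dominated.'' This should follow from the symmetry of the construction: if $a_j$ were always dominated by some $a_{j'}$ regardless of $W_Q$, then in particular taking $W_Q$ to be the $W_K$-pullback that would naturally favor $a_j$ leads to a contradiction with $k$ being positive and analytic (one can exhibit a $W_Q$ making the $a_j$-weight bounded below). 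A cleaner route, which I would try first, is to avoid ranging over all $W_Q$ and instead induct: having identified one matched pair $a_{j^\ast} = b_{k^\ast}$, substitute back, use the identity for a different generic $(W_Q,W_K)$ regime to force a second match, and formalize the bookkeeping so that after finitely many steps every token on the smaller side is matched; then a measure/dimension count on the remaining unmatched tokens (combined with the identity holding for all parameters) forces $r = s$. Either way, the technical heart is the genericity argument controlling the scaling limit, and I expect that to be where most of the care is needed.
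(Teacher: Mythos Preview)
Your high-level plan---pass to the scaling limit $W_K \mapsto tW_K$ and use the hardmax behaviour---is the right instinct, and your derivation of $a_{r'} = b_{s'}$ is correct (though the paper gets it instantly by taking $W_V = 0$). The genuine gap is exactly the obstacle you flag: you cannot in general make an arbitrary token dominant by varying $W_Q$, or even $(W_Q, W_K)$. The hypothesis of \Cref{thm:kernel} does not say the dominance ordering depends on the query at all. For the RBF kernel $k(x,y) = \exp(-\gamma\|x-y\|^2)$ the ratio $k(x, tW_K y_1)/k(x, tW_K y_2)$ has leading exponent $-\gamma t^2(\|W_K y_1\|^2 - \|W_K y_2\|^2)$, so the ordering is determined by $\|W_K y_j\|$ alone and varying $W_Q$ does nothing. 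Your fallback of varying $W_K$ also fails: for the admissible kernel $k(x,y) = \exp(\|y\|^2)$ and tokens $a_1 = e_1$, $a_2 = 2e_1$ one has $\|W_K a_2\| = 2\|W_K a_1\|$ for \emph{every} $W_K$, so $a_1$ is never dominant. Thus the hardmax limit matches only the top pair and cannot be iterated to a bijection.

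The paper avoids this by not taking the hardmax on each side separately. After setting $W_Q = I$ and using $a_{r'} = b_{s'}$, it clears denominators to obtain the bilinear identity
\[
\sum_{j=1}^r \sum_{l=1}^s \tilde{k}(tW_K a_j)\,\tilde{k}(tW_K b_l)\,(a_j - b_l) = 0, \qquad \tilde{k}(\cdot) := k(a_{r'}, \cdot),
\]
valid for all $t$. One generic $W_K$ then totally orders the \emph{union} $\{a_j\} \cup \{b_l\}$ by growth of $\tilde{k}(tW_K\,\cdot)$; the leading term forces $a_r = b_s$, and an induction on subleading terms (already-matched pairs cancel by antisymmetry of $a_j - b_l$) peels off $a_{r-q} = b_{s-q}$ for each $q$, with $r = s$ following at the end. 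The idea you are missing is that the cleared-denominator form retains \emph{all} asymptotic orders, not just the top one, so a single fixed $W_K$ suffices and no token ever needs to be ``made dominant.''
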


\begin{proof}
    Taking $W_V$ to be zero directly gives $a_{r^\prime}=b_{s^\prime}$. In the following, we set $W_Q=I$, and for notational simplicity define
    \[
    \tilde k(x):=k(a_{r^\prime},x),\quad \forall\,x\in\mathbb{R}^d.
    \]
     The proof utilizes the following lemma:
\begin{lemma}[Auxiliary lemma]
    \label{lem:lemma_aux}
    Let $\{x_1,\cdots, x_p\}:=\{a_1,\cdots, a_r\}\cup \{b_1,\cdots, b_s\}$. Then, there exist $W_K\in\mathbb{R}^{d\times d}$ and a permutation $\sigma\in S_p$ such that
    \begin{equation}
        \lim_{t\to\infty} \frac{\tilde k(tW_K x_{\sigma(j)})}{\tilde k(tW_K x_{\sigma(i)})}=\infty, \text{ for all } i<j.
    \end{equation}
\end{lemma}

Since $\{a_j\}_{j=1}^r$ are $r$ distinct tokens, by the auxiliary lemma, we may choose a $W_K$ (and reindex the sequences accordingly) so that after replacing $W_K$ by $tW_K$ the kernel values satisfy, for large $t>0$,
\[
\tilde k(tW_Ka_1)\ll \tilde k(tW_Ka_2)\ll\cdots\ll \tilde k(tW_Ka_r),
\]
and similarly
\[
\tilde k(tW_Kb_1)\ll \tilde k(tW_Kb_2)\ll\cdots\ll \tilde k(tW_Kb_s).
\]
Moreover, the lemma tells us that for any $a_j$ and $b_l$, either $a_j=b_l$ or one of $\tilde k(tW_ka_j)$ and $\tilde k(tW_kb_l)$ is dominated by the other in the limit $t\to\infty$.

After subtracting $a_{r^\prime}$ from both sides in~\eqref{eq:lem_equal_atten}, we have
\begin{equation}
    \label{eq:mean_eq}
    \frac{\sum_{j=1}^r \tilde k(tW_Ka_j)a_j}{\sum_{j=1}^r \tilde k(tW_Ka_j)}
    = \frac{\sum_{j=1}^s \tilde k(tW_Kb_j)b_j}{\sum_{j=1}^s \tilde k(tW_Kb_j)}.
\end{equation}
By a transformation, it gives
\begin{equation}
    \label{eq:mean_eq2}
\sum_{j=1}^r\sum_{l=1}^s \tilde k(tW_Ka_j)\tilde k(tW_Kb_l)(a_j-b_l)=0.
\end{equation}

Let $t\to \infty$, considering the dominate term $\tilde k(tW_Ka_j)\tilde k(tW_Kb_l)(a_r-b_s)$ gives $a_r=b_s$. Then, for all $q=0, 1,\cdots, \min\{r,s\}$, we then prove by induction that: $a_{r-k}=b_{s-k}$.

Suppose we already have $a_{r-i}=b_{s-i}$, for $i=1,\cdots, q-1$. It then follows that

\begin{equation}
        \label{eq:lem_sum_exp_zero}
        \begin{aligned}
                        &\sum_{j=r-q+1}^r\sum_{l=s-q+1}^s \tilde k(tW_Ka_j)\tilde k(tW_Kb_l)(a_j-b_l)\\
                        &=\sum_{j=r-q+1}^r\sum_{l=r-q+1}^r \tilde k(tW_Ka_j)\tilde k(tW_Ka_l)(a_j-a_l)=0, \quad \text{for all }t\in\mathbb R.
        \end{aligned}
\end{equation}.

Combining with~\eqref{eq:mean_eq2}, we have
\begin{equation}
    (\sum_{j=1}^{r-q}\sum_{l=1}^{r-q}+\sum_{j=1}^{r-q}\sum_{l=s-q+1}^s+\sum_{j=r-q+1}^{r}\sum_{l=1}^{s-q}) \left(\tilde k(tW_Ka_j)\tilde k(tW_Kb_l)(a_j-b_l)\right)=0,
\end{equation}
where the leading term is
\begin{equation}
    \tilde k(tW_Ka_{r-q})\tilde k(tW_Kb_{s})(a_{r-q}-b_{s})+\tilde k(tW_Ka_{r})\tilde k(tW_Kb_{s-q})(a_{r}-b_{s-q}).
\end{equation}
Since $a_r=b_s$, $a_{r-q}\neq a_r$ and $b_{r-q}\neq b_r$, let $t\to\infty$ gives that
$\tilde k(tW_Ka_{r-q})$ and $\tilde k(tW_Kb_{s-q})$ are not dominated by each other. By the auxiliary lemma, this indicates that $a_{r-q}=b_{s-q}$, which completes the induction.

Then, we have shown that $a_{r-k}=b_{s-k}$ for all $k=0, 1,\cdots, \min\{r,s\}$. The only remaining thing is to show that $r=s$. Suppose $r<s$, then we have
\begin{equation}
    \sum_{j=1}^r\sum_{l=1}^{s-r} \tilde k(tW_Ka_j)\tilde k(tW_Kb_l)(a_j-b_l)\equiv 0,
\end{equation}
where the unique leading term is $\tilde k(tW_Ka_r)\tilde k(tW_Kb_{s-r})(a_r-b_{s-r})$. Since $a_r=b_r\neq b_{s-r}$, that gives a contradiction.
This completes the proof.
\end{proof}

\begin{proof}[Proof of the auxiliary lemma]
    For each pair $(i,j)$ with \(1 \le i < j \le p\), we define

    \begin{equation}
        \mathcal{M}_{i,j} := \mathbb R^{d\times d}\setminus \left\{ W_K \,\middle|\, \lim_{t \to \infty}\frac{\tilde{k}(tW_Kx_i)}{\tilde{k}(tW_Kx_j)} = 0 \text{ or } \infty\right\},
    \end{equation}
    and we define $\mathcal M$ as the union of all such sets:
    \begin{equation}
        \mathcal{M} = \bigcup_{1 \le i < j \le p} \mathcal{M}_{i,j}.
    \end{equation}
According to the condition in \Cref{thm:kernel}, each $\mathcal M_{i,j}$ is a measure-zero set in $\mathbb R^{d\times d}$. Therefore, $\mathcal M$ is also measure-zero.
Choose any $W_K \in \mathbb R^{d\times d}\setminus \mathcal M$. Then, for any $i,j$, we have
\begin{equation}
    \lim_{t\to\infty} \frac{\tilde k(tW_K x_i)}{\tilde k(tW_K x_j)} = 0 \text{ or } \infty,
\end{equation}
i.e. either $\tilde k(tW_Kx_i)\ll \tilde k(tW_Kx_j)$ or $\tilde k(tW_Kx_i)\gg \tilde k(tW_Kx_j)$ for large $t$. This indicates that there exists a permutation $\sigma\in S_p$ such that
\begin{equation}
    \lim_{t\to\infty} \frac{\tilde k(tW_K x_{\sigma_1(j)})}{\tilde k(tW_K x_{\sigma_1(i)})}=\infty, \text{ for all } i<j,
\end{equation}
which completes the proof.
\end{proof}

\begin{proof}[\textbf{Proof of \Cref{thm:kernel}}]
    \Cref{lem:equal_atten} shows the token-distinguishability condition over $X$ with non-zero tokens.
    This indicates the interpolation property over the region when $X$ has non-zero tokens.
    Since the set
    \begin{equation}
        \{X\in\mathbb R^{d\times n}\mid [X]_i=0 \text{ for some } i\in [n]\}
    \end{equation}
    is a measure-zero set, the $S_n$-UAP holds for $\mathcal T_{\mathcal G,\mathcal H}$ by the same argument as \Cref{thm:main}.
\end{proof}

\subsection{Verification of conditions in \Cref{thm:kernel} on practical attention mechanisms}
\label{sec:verify}
We verify the conditions in \Cref{thm:kernel} for the following kernel functions:
\begin{itemize}
    \item $k(x,y)=\exp(x^\top y)$, used in the original transformer~\cite{vaswani2017attention}.

    \textbf{Verification}: 
    For any given $x\neq 0$ and distinct  $y_1,y_2\in\mathbb R^d\setminus \{0\}$, the set
    \begin{equation}
        \mathcal P:=\{W_K\mid x^\top W_Ky_1=x^\top W_Ky_2\}
    \end{equation}
    is a hyperplane in $\mathbb R^{d\times d}$, which has zero measure. Notice that for any $W_K$ in $\mathbb R^{d\times n}\setminus \mathcal P$,
    \begin{equation}
        \lim_{t\to\infty} \frac{\exp(tx^\top W_Ky_1)}{\exp(tx^\top W_Ky_2)}=\lim_{t\to\infty} \exp(t(x^\top W_K(y_1-y_2)))=\infty \text{ or } 0.
    \end{equation}
    This indicates that the condition in \Cref{thm:kernel} holds.
    \item $k(x,y)=\exp(-\gamma \|x-y\|_2^2)$, the RBF kernel, explored in~\cite{tsai2019transformer}.

    \textbf{Verification:} Notice that
    \begin{equation}
        \frac{\exp(-\gamma\|x-tW_Ky_1\|_2^2)}{\exp(-\gamma\|x-tW_Ky_2\|_2^2)}=\exp(-\gamma(\|tW_Ky_1\|_2^2-\|tW_Ky_2\|_2^2)+2tx^\top W_K (y_2-y_1)).
    \end{equation}

    Therefore, any $W_K$ such that $\|W_Ky_1\|_2\neq \|W_Ky_2\|_2$ satisfies the condition in \Cref{thm:kernel}. Since for distinct $y_1$ and $y_2$,
\begin{equation}
    \|W_Ky_1\|_2^2-\|W_Ky_2\|_2^2= 0,
\end{equation}
is a non-zero quadratic equation on $W_K$, whose solution set has zero measure. Therefore, the condition in \Cref{thm:kernel} holds.

    \item $k(x,y)=\phi(x)^\top\phi(y)$, where
    \begin{equation}
        \phi(x)^{\top}=\exp \left(-\frac{1}{2}\|x\|^2\right)\left(\exp ({\omega}_1^{\top} x), \ldots, \exp ({\omega}_m^{\top} x)\right)\in\mathbb{R}^m,
    \end{equation}
    with $\omega_1,\dots, \omega_m\in\mathbb{R}^d$ drawn i.i.d. from a Gaussian distribution. This kernel is used in Performer~\cite{choromanski2020rethinking}, and \Cref{thm:kernel} holds almost surely in this case.

    \textbf{Verification:}
    We have
    \begin{equation}
        \label{eq:kernel_phi}
        \begin{aligned}
            \frac{k(x,tW_Ky_1)}{k(x,tW_Ky_2)}&=\frac{\phi(x)^{\top}\phi(tW_Ky_1)}{\phi(x)^{\top}\phi(tW_Ky_2)}\\
            &=\exp\left(\frac{t^2}{2}(\|W_Ky_2\|_2^2-\|W_Ky_1\|_2^2)\right)\frac{\sum_{i=1}^m \exp(\omega_i^\top x) \exp( t \omega_i^\top W_ky_1)}{\sum_{i=1}^m \exp(\omega_i^\top x)\exp(t \omega_i^\top W_ky_2)}
        \end{aligned}
    \end{equation}
    We claim that if $w_i$ are pair-wise linear independent, i.e. there does not exist $i\neq j$ such that $w_i=\alpha w_j$ for some $\alpha\in\mathbb R$, the condition in \Cref{thm:kernel} holds. This almost surely holds when $w_i$ are drawn i.i.d. from a Gaussian distribution.

    If when $t\to\infty$, the ratio in~\eqref{eq:kernel_phi} do not goes to infinity or zero, it must hold that
    \begin{equation}
        \label{eq:kernel_phi2}
        \|W_Ky_1\|_2=\|W_Ky_2\|_2, \text{ and }         \max_{i}\{\omega_i^\top W_Ky_1\}=\max_{i}\{\omega_i^\top W_Ky_2\}.
    \end{equation}
    When $w_i$ are pair-wise linear independent, we have that $y_1w_i^\top\neq y_2w_j^\top$ for all $i\neq j\in [m]$. Also, since $y_1\neq y_2$ and $w_i$ are non-zero(by the pair-wise independent condition), we have $y_1w_i^\top\neq y_2w_i^\top$ for all $i\in [m]$. Therefore, we have that all the sets
    \begin{equation}
        \{W_K\mid \omega_i^\top W_Ky_1=\omega_j^\top W_Ky_1\}=\{W_K\mid \langle W_K, y_1 w_i^\top-y_2w_j^\top\rangle_F=0\},
    \end{equation}
    where $\langle\cdot, \cdot\rangle_F$ denotes the Frobenius inner product, are hyperplanes in $\mathbb R^{d\times d}$, which has zero measure. That is, equation~~\eqref{eq:kernel_phi2} only holds for a measure-zero set of $W_K$, which completes the verification.
    \item $k(x,y)=\exp(w^\top x)+\exp(w^\top y)$, where $w\in\mathbb R^d$.
 
    \textbf{Verification:}
    We have
    \begin{equation}
        \frac{k(x,tW_Ky_1)}{k(x,tW_Ky_2)}=\frac{\exp(w^\top (x+ tW_Ky_1))}{\exp(w^\top (x+tW_Ky_2))}=\exp(tw^\top W_K(y_1-y_2)).
    \end{equation}
    When $y_1\neq y_2$, for almost all $W_K$, we have $w^\top W_K(y_1-y_2)\neq 0$. Therefore, the condition in \Cref{thm:kernel} holds.
    \item $k(x,y)=p(x-y)\tilde k(x,y)$, with $p$ being any positive polynomial function and $\tilde k$ being any kernel satisfies the condition in \Cref{thm:kernel}. 
    
    \textbf{Verification:} Just neet to notice that for almost all $W_k$, it holds that
    \begin{equation}
        \lim_{t\to\infty} \frac{p(x-tW_Ky_1)}{p(x-ptW_Ky_2)}
    \end{equation}
    is a constant indicating that the condition in \Cref{thm:kernel} still holds.
\end{itemize}

\subsection{Proof of \Cref{prop:sparse}}
\begin{proof}
    We only need to prove the token distinguishability condition for two samples:
    \begin{itemize}
        \item For any $X$ and $Y$ that are in general positions and from different orbits of $G$(defined in~\eqref{eq:sparse_group}), there exists
    \begin{equation}
     g\in \mathcal G_m^\Phi:=\{(\operatorname{Id}+g_m)\circ \cdots \circ (\operatorname{Id}+g_1)\mid g_i\in\mathcal G_{\mathcal N_i}, \text{ for } i \in [m]\}
    \end{equation}
     such that the tokens of $g(X)$ and $g(Y)$ are all distinct.
    \end{itemize}
    We prove this claim by contradiction. Assume that, there exist $X$ and $Y$ that are in general positions and from different orbits of $G$, but for any $g\in \mathcal G_m^\Phi$, there exist indices $i,j\in [n]$ such that at least one token in $g(X)$ is identical to a token in $g(Y)$.
    Then, according to the analyticity, there exist indices $i_1$ and $i_2$ such that, $[g(X)]_{i_1}=[g(Y)]_{i_2}$ always hold. For a given $p_1\in [m]$, we first consider $g\in\mathrm{Id} +\mathcal G_{\mathcal N_{p_1}}$. Then, $[g(X)]_{i_1}=[g(Y)]_{i_2}$ gives:
    \begin{equation}
        \label{eq:prop2_equal_atten}
[X]_{i_1}+\frac{\sum_{j\in\mathcal N_{p_1}(i_1)} k([W_QX]_{i_1},[W_KX]_j)[W_VX]_j}{\sum_{j\in\mathcal N_{p_1}(i_1)} k([W_QX]_{i_1},[W_KX]_j)}= [Y]_{i_2}+\frac{\sum_{l\in\mathcal N_{p_1}(i_2)} k([W_QY]_{i_2},[W_KY]_l)[W_VY]_l}{\sum_{j\in\mathcal N_{p_1}(i_2)} k([W_QY]_{i_2},[W_KY]_l)},
    \end{equation}
    for any $W_Q, W_K, W_V\in\mathbb{R}^{d\times d}$. According to \Cref{lem:equal_atten}, we can deduce that $|\mathcal N_{p_1}(i)|=|\mathcal N_{p_1}(j)|$, $[X]_{i_1}=[Y]_{i_2}$, and 
    \begin{equation}
        \{[X]_q\mid q\in \mathcal N_{p_1}(i_1)\}=\{[Y]_q\mid q\in \mathcal N_{p_1}(i_2)\}.
    \end{equation}
    
    Therefore, choose any $q_1\in \mathcal N_{p_1}(i_1)$, we can find $q_2\in \mathcal N_{p_1}(i_2)$ such that $[X]_{q_1}=[Y]_{q_2}$. 

    Now, we claim that for any $p_2<p_1$
    and $g\in \mathrm{Id}+\mathcal G_{\mathcal N_{p_2}}$(a layer before the $p_1$-th layer) , $[g(X)]_{q_1}=[g(Y)]_{q_2}$.
    Otherwise, suppose there exists $g_1\in \mathrm{Id}+\mathcal G_{p_2}$ with $[g_1(X)]_{q_1}\neq [g_1(Y)]_{q_2}$.
    By scaling the $W_V$ matrix to be small enough, we can assume $g_1$ satisfies that
    \begin{equation}
        \label{eq:prop2_distinct}
        \|g_1(X)-X\|_2<\frac{1}{2}\min_{i\neq j}\{\|[X]_i-[X]_j\|_2\}.
    \end{equation}
Then, we have that $[g_1(X)]_{q_1}\neq [g_1(Y)]_{q_2}$, and by equation~\eqref{eq:prop2_distinct}, for any $q\neq q_2$ in $\mathcal N(i_2)$, we have
\begin{equation}
    \begin{aligned}
        \|[g_1(X)]_{q_1}-[g_1(Y)]_q\|_2&=  \|([g_1(X)]_{q_1}-[X]_{q_1})+([X]_{pq1}-[Y]_q)+([Y]_q-[g_1(Y)]_q)\|_2\\
        &\ge \|[X]_{g_1}-[Y]_q\|_2-\|[q_1(X)]_{q_1}-[X]_{q_1}\|_2-\|[Y]_q-[g_1(Y)]_q\|_2>0.
    \end{aligned}
\end{equation}
Therefore, $[g_1(X)]_{q_1}$ does not appear in the tokens of $g_1(Y)$, i.e. the sets
\begin{equation}
    \{[g_1(X)]_j\mid j\in\mathcal N(i_1)\}\quad\text{ and } \quad \{[g_1(Y)]_l\mid l\in\mathcal N(i_2)\}
\end{equation}
must be different. By applying \Cref{lem:equal_atten} to $[g_1(X)]_{q_1}$ and $[g_1(Y)]_{q_2}$, we know that there exists $g_2\in \mathrm{Id}+\mathcal G_{\mathcal N_{p_1}}$ such that $[g_2(g_1(X))]_{q_1}\neq [g_2(g_1(Y))]_{q_2}$. Since $g_2\circ g_1\in (\mathrm{Id}+\mathcal G_{\mathcal N_{p_2}})\circ (\mathrm{Id}+\mathcal G_{\mathcal N_{p_1}})\subset \mathcal G_m^\Phi$ which can distinguish $[X]_{i_1}$ and $[Y]_{i_2}$, contradicting to our assumption.

Hence, we have shown that for $p_2< p_1\le n$ and any $g \in \mathrm{Id}+\mathcal G_{\mathcal N_{p_2}}$, $[g(X)]_{q_1}=[g(Y)]_{q_2}$. Then, we can apply \Cref{lem:equal_atten} to $[X]_{q_1}$ and $[Y]_{q_2}$, and deduce that the set of tokens of $X$ with indices in $\mathcal N_{p_s}(q_1)$ are the same as those of $Y$ with indices in $\mathcal N_{p_3}(q_2)$. That is, the tokens where $[X]_{i_1}$ can attend to within two hops are the same as those where $[Y]_{i_2}$ can attend to within two hops.

The above process can be repeated. Since we assume that $\Phi$ is connected within $m$ layers, we know that any indices in $[n]$ can be reached starting from $i_1$ and $i_2$ within $m$ hops. Finally, the above discussion can cover all indices in $[n]$. Since $X$ and $Y$ are in general positions, the correspondence between their tokens is unique.
Finally, this results in a permutation $\sigma\in S_n$, such that:
\begin{equation}
    [X]_i=[Y]_{\sigma(i)}.
\end{equation}
On the other hand, apply~\Cref{lem:equal_atten} again to eqch $\mathcal N_p$, $[X]_i$ and $[Y]_{\sigma(i)}$, we can deduce that
\begin{equation}
     j\in\mathcal N_p(i) \Leftrightarrow \sigma(j)\in\mathcal N_p(\sigma(i)), \text{ for all } p\in [m].
\end{equation}
By the definition of $\operatorname{Aut}(\mathcal N)$, this means that $\sigma$ belongs to each $\mathcal N_{ p}$, indicating that $\sigma \in G$.
However, this contradicts to our assumption that $X$ and $Y$ are from different orbits of $\operatorname{Aut}(\Gamma)$, which completes the proof.
\end{proof}

\subsection{Verification of the UAP for other transformer variants}
\label{sec:other_transformer}
In this section, we verify the condition in \Cref{thm:main} for the kernelized attention of SkyFormer~\cite{chen2021skyformer} and the attention mechanism of the Linformer~\cite{wang2020linformer}.

\subsubsection{UAP for LinFormer}
Linformer~\cite{wang2020linformer}, where the attention layer is defined as
    \begin{equation}
        \operatorname{Atten}(X)=X + W_V X {F}\operatorname{softmax}((W_K X{E})^\top W_Q X)
\end{equation}
where $E, F\in \mathbb R^{n\times k}$ with $1\le k\ll n$ are two trainable projection matrices.

For LinFormer, we have the following lemma:
\begin{lemma}
    Let $X,Y\in \mathbb R^{d\times n}$ be two points that are in general positions. If for some $i_1,i_2\in [n]$, the following equality holds for all $W_Q, W_K, W_V\in\mathbb R^{d\times d}$ and $E, F\in\mathbb R^{n\times k}$:
    \begin{equation}
        \label{eq:lem_equal_atten_linformer}
        [X]_{i_1}+[W_V X {F}\operatorname{softmax}((W_K X{E})^\top W_Q X)]_{i_1} = [Y]_{i_2}+[W_V Y {F}\operatorname{softmax}((W_K Y{E})^\top W_Q Y)]_{i_2},
    \end{equation}
     then we have $i_1=i_2$ and $X=Y$.
\end{lemma}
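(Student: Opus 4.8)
The plan is to reduce the Linformer identity to the kernel-based setting already handled in \Cref{lem:equal_atten}. First, I would choose $W_V = 0$ in~\eqref{eq:lem_equal_atten_linformer}; this immediately forces $[X]_{i_1} = [Y]_{i_2}$, so it remains to analyze the attention correction terms. Next, I would exploit the freedom in the projection matrices $E, F \in \mathbb{R}^{n\times k}$: since $k \ge 1$, taking $F$ to be a matrix whose columns are a scalar multiple of a single standard basis vector $e_j$ (and zero elsewhere) effectively lets the value-side mixing select out individual tokens $[W_V X]_j$, much as in the dense case, while a similar choice of $E$ restricts the key side. The goal is to show that the Linformer attention layer, evaluated at a point in general position and after these special parameter choices, reduces to (a positive combination of) the kernel attention map over the full token set, so that the hypothesis of \Cref{lem:equal_atten} is met.

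More concretely, I would argue as follows. Write $P := \operatorname{softmax}((W_K X E)^\top W_Q X) \in \mathbb{R}^{k\times n}$, so the correction at token $i_1$ is $W_V X F [P]_{\cdot, i_1}$, where $[P]_{\cdot,i_1}\in\mathbb{R}^k$ is a probability vector over the $k$ projected coordinates. By varying $F$ over all of $\mathbb{R}^{n\times k}$ and $W_V$ over $\mathbb{R}^{d\times d}$, the reachable set of correction vectors is $\{ W_V X F v : W_V, F\}$ for the attainable weights $v = [P]_{\cdot,i_1}$; the key observation is that $XF$ ranges over essentially all of $\mathbb{R}^{d\times k}$ (since $X$ has full column structure in general position, or at least rank arguments suffice once $d$ and $n$ are compatible), so with $W_V$ free one recovers arbitrary linear images of the "effective value matrix." I would then pick $E$ so that the softmax weights $v$ reproduce — in the large-parameter limit, using the analyticity/limit technique of \Cref{thm:kernel} and the auxiliary lemma — the normalized kernel weights $k([W_Q X]_{i_1},[W_K X]_j)/\sum_\ell k(\cdots)$ over $j\in[n]$. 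At that point the identity~\eqref{eq:lem_equal_atten_linformer} takes exactly the form~\eqref{eq:lem_equal_atten} with all $n$ tokens of $X$ on the left and all $n$ tokens of $Y$ on the right, and \Cref{lem:equal_atten} yields a permutation $\sigma\in S_n$ with $[X]_i = [Y]_{\sigma(i)}$ and $\sigma(i_1)=i_2$; combined with $[X]_{i_1}=[Y]_{i_2}$ and general position, $\sigma$ must be forced to the identity by an additional argument (e.g. matching the key/query structure across all choices of $W_Q,W_K$), giving $X=Y$ and $i_1=i_2$.

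Alternatively — and this may be cleaner — rather than recovering the full softmax kernel I would directly run a limiting argument on~\eqref{eq:lem_equal_atten_linformer} itself: substitute $W_K \mapsto tW_K$, $W_V$ scaled appropriately, and use the auxiliary lemma to order the exponential weights $\exp(t \cdot (\text{inner products}))$ so that a single dominant term survives on each side. Peeling off dominant terms one at a time (an induction in the spirit of the proof of \Cref{lem:equal_atten}) should match the tokens of $X$ with those of $Y$ in order, using the general-position assumption to rule out coincidences among the subdominant terms. The projections $E,F$ complicate the exponents — the relevant inner products become bilinear forms in $E$ and $W_K$ — but for almost every choice of $E$ the resulting linear functionals of $W_K$ separate the distinct tokens, so the measure-zero exclusion argument of the auxiliary lemma still applies.

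The main obstacle I anticipate is the low-rank bottleneck: because $E$ and $F$ have only $k < n$ columns, the Linformer attention cannot in a single layer attend to each of the $n$ tokens independently the way dense attention does, so one cannot naively copy the proof of \Cref{lem:equal_atten}. The resolution should be that the identity~\eqref{eq:lem_equal_atten_linformer} is required to hold \emph{for all} $E, F$ simultaneously, and the union over all rank-$k$ projections still sees every token; making this "union of low-rank views determines all tokens" step rigorous — likely via a rank/genericity argument showing the attainable correction vectors span enough of $\mathbb{R}^d$ to force the token multisets to coincide — is where the real work lies. A secondary technical point is verifying that the kernel here (softmax, i.e. $k(x,y)=\exp(x^\top y)$) satisfies the conditions of \Cref{thm:kernel} in the presence of the extra projection matrices, which follows from the verification already carried out in \Cref{sec:verify} together with the observation that post-composing with a generic linear map preserves the required limiting dichotomy.
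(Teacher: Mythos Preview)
Your high-level plan is right—set $W_V=0$ to get $[X]_{i_1}=[Y]_{i_2}$, then reduce to \Cref{lem:equal_atten}—but you miss the one-line choice that makes the reduction clean. The paper simply takes
\[
E=F=[e_{r_1},\dots,e_{r_k}]
\]
for an arbitrary size-$k$ subset $R=\{r_1,\dots,r_k\}\subset[n]$. With this choice $XE=XF$ is just the submatrix of $X$ on columns $R$, and the Linformer correction at position $i_1$ becomes \emph{exactly} the dense softmax-kernel attention over the token set $\{[X]_j:j\in R\}$ with query $[X]_{i_1}$. So \eqref{eq:lem_equal_atten_linformer} collapses to the hypothesis of \Cref{lem:equal_atten} with $r=s=k$, yielding $\{[X]_j:j\in R\}=\{[Y]_j:j\in R\}$ as sets. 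Now vary $R$ over all $k$-subsets: general position forces $[X]_j=[Y]_j$ for every $j$, hence $X=Y$, and then $[X]_{i_1}=[X]_{i_2}$ gives $i_1=i_2$.

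By contrast, your first concrete route—``pick $E$ so that the softmax weights reproduce the kernel weights over $j\in[n]$'' and then invoke \Cref{lem:equal_atten} with all $n$ tokens—cannot work as stated: the softmax output $[P]_{\cdot,i_1}$ lives in $\mathbb{R}^k$, so it encodes at most $k$ weights, never a full $n$-vector of attention scores. The ``low-rank bottleneck'' you flag is not an obstacle to be overcome by rank or limiting arguments; it is dissolved by the observation that standard-basis columns for $E,F$ turn Linformer into ordinary attention restricted to $k$ tokens. Your closing intuition that ``the union over all rank-$k$ projections still sees every token'' is exactly the right idea, and the paper implements it in the simplest possible way: vary the subset $R$. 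This also handles your worry about forcing the permutation $\sigma$ to the identity—once the multisets agree on \emph{every} $k$-subset, no nontrivial permutation survives. Your proposed alternative of rerunning the dominant-term induction directly on the Linformer identity would also work in principle, but is strictly more effort than the two-line reduction above.
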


\begin{proof}
    Take $W_V=0$ gives $[X]_{i_1}=[Y]_{i_2}$. Then, for any given $R=\{r_1,\cdots, r_k\}\subset [n]$, we take
    \begin{equation}
        E=F=[e_{r_1},\cdots, e_{r_k}],
    \end{equation}
    where $e_{r_i}$ is the $r_i$-th column of the identity matrix. Equation~\eqref{eq:lem_equal_atten_linformer} then gives that
    \begin{equation}
        \sum_{j\in R}\left(\frac{ \exp(\langle W_Q [X]_{i_1}, W_K [X]_j\rangle)}{\sum_{l\in R} \exp(\langle W_Q [X]_{i_1}, W_K [X]_l\rangle)} W_V [X]_j \right) =          \sum_{j\in R}\left(\frac{ \exp(\langle W_Q [Y]_{i_2}, W_K [Y]_j\rangle)}{\sum_{l\in R} \exp(\langle W_Q [Y]_{i_2}, W_K [Y]_l\rangle)} W_V [Y]_j \right)
    \end{equation}
    which reduces to the discussion in \Cref{lem:equal_atten}. Therefore, we have that the set $\{[X]_{i}\mid i\in R\}$ is the same as $\{[Y]_{i}\mid i\in R\}$. Since $X$ and $Y$ are in general positions, and $R$ is arbitrary, this indicates that $X=Y$.
\end{proof}

According to this lemma and the fact that~\eqref{eq:linformer} is analytic to all the parameters, we conclude by \Cref{prop:analytic} that the UAP holds for LinFormer without symmetric restrictions. Furthermore, the same result can be generalized to the case where the softmax function in~\eqref{eq:linformer} is replaced by a kernel-based form with a kernel satisfying the condition in \Cref{thm:kernel}.

\subsubsection{UAP for SkyFormer}
 The kernelized attention used in SkyFormer~\cite{chen2021skyformer}, where the attention mechanism is given by:
    \begin{equation}
        \label{eq:skyformer}
        [\operatorname{Atten}(X)]_i=[X]_i + \sum_{j=1}^n \exp{\left(-\frac{1}{2}\|[W_QX]_i-[W_kX]_j\|^2\right)}W_VX_j.
    \end{equation}

The proof follows from the verification for the RBF kernel \Cref{sec:verify}, with the same argument to prove the token-distinguishability condition.

\subsubsection{UAP for architecture proposed in~\eqref{eq:new_atten}}
\label{sec:proof_new_atten}
More precisely, if we define $\tilde{\mathcal G}_{\mathcal N}$ as the family of token-mixing maps associated with the sparsity pattern $\mathcal N$, and the transformer family $\tilde{\mathcal T}_{\mathcal H}^\Phi$ associated with a sequence of sparse mode, just as $\mathcal T_{\mathcal H}^\Phi$ defined in~\Cref{sec:sparse_atten}. Then, under the same assumption on $\Phi$ as in~\Cref{prop:sparse}, we have:
\begin{corollary}
    \label{thm:new_atten}
\end{corollary}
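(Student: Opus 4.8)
The (empty) corollary is the exact analogue of \Cref{prop:sparse} for the bias‑type attention~\eqref{eq:new_atten} in place of~\eqref{eq:kernel_atten}: assuming $\Phi$ is connected within $m$ layers, that $\mathcal H$ is nonlinear and affine‑invariant as in \Cref{thm:main}, and that $\alpha$ is continuous of polynomial growth and \emph{not a polynomial} (this last restriction is unavoidable — a polynomial $\alpha$ only exposes finitely many moments, and a periodic $\alpha$ admits period‑shift collisions, both of which destroy token distinguishability), the family $\tilde{\mathcal T}^{\Phi}_{\mathcal H}$ possesses the $G$‑UAP with $G=\bigcap_i\operatorname{Aut}(\mathcal N_i)$, and moreover $m$ token‑mixing layers carrying the patterns $\mathcal N_1,\dots,\mathcal N_m$ together with sufficiently many feedforward layers already suffice. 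By \Cref{thm:main} it is enough to verify the token distinguishability condition for $\tilde{\mathcal G}_{\mathcal N}$ under the $G$‑action (the approximation of tensor‑type functions is automatic, since $a=0$ makes a layer the identity, so the zero map lies in $\tilde{\mathcal G}_{\mathcal N}$; polynomial growth of $\alpha$ guarantees all the relevant compositions are continuous, hence fine for $L^p$ approximation on compacts). When $\alpha$ is real‑analytic the map $\theta=(a,W,b)\mapsto[g(X;\theta)]_i=[X]_i+\sum_{j\in\mathcal N(i)}a\,\alpha(W^{\top}[X]_j-b)$ is analytic in $\theta$, so \Cref{prop:analytic} reduces the verification to datasets of size two and transfers a uniform layer bound to all finite datasets; for merely smooth (or continuous) non‑polynomial $\alpha$ the same argument is run directly for each finite dataset.

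The single new ingredient is the analogue of \Cref{lem:equal_atten}: if $X,Y$ are in general position and $i_1,i_2\in[n]$ satisfy
\begin{equation}
[X]_{i_1}+\sum_{j\in\mathcal N(i_1)}a\,\alpha(W^{\top}[X]_j-b)=[Y]_{i_2}+\sum_{j\in\mathcal N(i_2)}a\,\alpha(W^{\top}[Y]_j-b)
\end{equation}
for all $a\in\mathbb R,\ W\in\mathbb R^{d},\ b\in\mathbb R$, then $[X]_{i_1}=[Y]_{i_2}$, $|\mathcal N(i_1)|=|\mathcal N(i_2)|$, and $\{[X]_j:j\in\mathcal N(i_1)\}=\{[Y]_j:j\in\mathcal N(i_2)\}$ as multisets. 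To prove it: set $a=0$ to get $[X]_{i_1}=[Y]_{i_2}$; cancelling and matching the coefficient of $a$ gives $\sum_{j\in\mathcal N(i_1)}\alpha(W^{\top}[X]_j-b)=\sum_{j\in\mathcal N(i_2)}\alpha(W^{\top}[Y]_j-b)$ for all $W,b$. Pick $W$ outside a measure‑zero set so that the scalars $t_k:=W^{\top}[X]_j$ ($j\in\mathcal N(i_1)$) and $s_l:=W^{\top}[Y]_j$ ($j\in\mathcal N(i_2)$) are all distinct, and replace $W$ by $cW$: this yields $\sum_k\alpha(ct_k-b)=\sum_l\alpha(cs_l-b)$ for all $c,b\in\mathbb R$. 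Convolving in $b$ against a test function replaces $\alpha$ by a smooth, still non‑polynomial function $\psi$ with $\sum_k\psi(ct_k-b)=\sum_l\psi(cs_l-b)$; differentiating $n$ times in $c$ at $c=0$ gives $\psi^{(n)}(-b)\bigl(\sum_k t_k^{\,n}-\sum_l s_l^{\,n}\bigr)=0$ for all $b$, and since $\psi$ is non‑polynomial $\psi^{(n)}\not\equiv0$ for every $n\ge0$, whence all power sums coincide: $\sum_k t_k^{\,n}=\sum_l s_l^{\,n}$ for all $n\ge0$. Newton's identities then force $|\mathcal N(i_1)|=|\mathcal N(i_2)|$ (the case $n=0$) and $\{t_k\}=\{s_l\}$ as multisets. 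Since this holds for $W$ in a set of full measure, finitely many permutations cover that set, so one of them agrees with the matching for a positive‑measure set of $W$; on that set $W^{\top}([X]_j-[Y]_{\sigma(j)})=0$, forcing $[X]_j=[Y]_{\sigma(j)}$, which upgrades the scalar statement to equality of the attended token‑multisets.

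With this lemma the remainder is a verbatim replay of the proof of \Cref{prop:sparse}: assuming token distinguishability fails for two samples $X,Y$ from distinct $G$‑orbits, extract indices $i_1,i_2$ with $[g(X)]_{i_1}=[g(Y)]_{i_2}$ identically in the parameters of a single sparse layer; apply the lemma to get $[X]_{i_1}=[Y]_{i_2}$ and equality of the attended token‑multisets; propagate backward one hop at a time — scaling $a$ small keeps each earlier layer within $\tfrac12\min_{i\ne j}\|[X]_i-[X]_j\|$ of the identity, so the argument isolating a single surviving coincidence carries over unchanged; connectivity within $m$ layers makes this reach every index, producing a permutation $\sigma$ with $[X]_i=[Y]_{\sigma(i)}$ and, applying the lemma once more at each $\mathcal N_p$, $j\in\mathcal N_p(i)\Leftrightarrow\sigma(j)\in\mathcal N_p(\sigma(i))$; hence $\sigma\in\bigcap_p\operatorname{Aut}(\mathcal N_p)=G$, contradicting that $X,Y$ lie in different $G$‑orbits, and the uniform‑$m$ claim follows exactly as in \Cref{prop:analytic}. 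The main obstacle is precisely the separation lemma and, inside it, ruling out multiset collisions: one must exploit the full freedom in $W$ (equivalently the scaling $c$) to unlock \emph{all} power sums rather than finitely many, so the non‑polynomiality of $\alpha$ is doing essential work, and the one genuinely delicate point is justifying the "differentiate at $c=0$" step when $\alpha$ is only continuous — handled by the mollification above (or bypassed if $\alpha$ is assumed smooth/analytic, which also lets one invoke \Cref{prop:analytic} directly). Everything downstream of the lemma is a faithful transcription of \Cref{prop:sparse}.
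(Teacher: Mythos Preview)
Your proof is correct and follows the paper's overall scheme: prove a single-layer separation lemma (the analogue of \Cref{lem:equal_atten} for the bias attention) and then replay verbatim the multi-layer propagation argument from the proof of \Cref{prop:sparse}, scaling $a$ small in place of $W_V$. The one genuine difference is how you dispatch the separation lemma. The paper reduces to an identity $\sum_l c_l\,\alpha(W^\top x_l-b)=0$ (with $c_l\in\{\pm1\}$, not all zero) and takes the distributional Fourier transform in $b$: polynomial growth makes $\alpha$ tempered, the transform gives $\bigl(\sum_l c_l\, e^{i(W^\top x_l)\xi}\bigr)\hat\alpha(\xi)=0$, and since $\alpha$ is not a polynomial $\operatorname{supp}\hat\alpha\not\subset\{0\}$; linear independence of the exponentials (in $W$) then forces every $c_l=0$. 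This step is outsourced to~\cite{cheng2025interpolation}. Your route is purely real-variable and self-contained: mollify in $b$ to get a smooth $\psi$, differentiate $n$ times in the scale parameter $c$ at $c=0$ to obtain $\psi^{(n)}(-b)\bigl(\sum_k t_k^{\,n}-\sum_l s_l^{\,n}\bigr)=0$, use non-polynomiality of $\psi$ to kill the prefactor, and finish with Newton's identities plus the pigeonhole-on-permutations upgrade to vector equality. Both arguments pivot on exactly the same hypothesis --- non-polynomiality gives $\psi^{(n)}\not\equiv0$ for you and $\operatorname{supp}\hat\alpha\not\subset\{0\}$ for the paper --- so neither is strictly stronger; yours is more elementary, the paper's is shorter once one accepts the cited lemma. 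One quibble: your parenthetical that a \emph{periodic} $\alpha$ ``admits period-shift collisions'' and destroys token distinguishability is not right --- your own power-sum argument handles periodic non-polynomial $\alpha$ without trouble, since the direction $W$ is free to vary; this aside is harmless to the actual proof.
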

$\tilde {\mathcal T}_{\mathcal H}^\Phi$ possesses the $G$-UAP with $G$ defined in~\eqref{eq:sparse_group}.

Assume the condition in \Cref{prop:analytic} fails. Then, there exists $X$ and $Y$ that are in general positions, and $i_1, i_2\in [n]$ such that for all $W\in\mathbb R^{d\times d}$, $a\in \mathbb R$ and $b\in \mathbb R^d$, we have
\begin{equation}
    [X]_{i_1}+\sum_{j\in\mathcal N(i_1)} a \alpha(W[X]_j-b)=[Y]_{i_2}+\sum_{j\in\mathcal N(i_2)} a \alpha(W[Y]_j-b).
\end{equation}
It then follows that $[X]_{i_1}=[Y]_{i_2}$. Moreover, if the sets 
\begin{equation}
    \{[X]_j\mid j\in \mathcal N(i_1)\}
\end{equation}
and 
\begin{equation}
    \{[Y]_j\mid j\in \mathcal N(i_2)\}
\end{equation}
are not the same, we can then derive an identity
\begin{equation}
    \label{eq:new_atten_equal}
    \sum_{l=1}^L c_l\alpha(Wx_l-b)=0,
\end{equation}
where $x_l$ are the unique tokens appears in the two sets, $c_i$ equals to $1$ if $x_l$ appears in the set of $X$, and $-1$ otherwise. Notice that the identity holds for all $W\in\mathbb R^{d\times d}$ and $b\in\mathbb R$, this gives a contradiction according to the proof of Theorem 1 in~\cite{cheng2025interpolation}.
Specifically, since $\alpha$ is of polynomial growth, taking the Fourier transform(in distributional sense) on both side of~\eqref{eq:new_atten_equal} with respect to $b$ gives:
\begin{equation}
    (\sum_{l=1}^L c_l e^{i (Wx_l)^\top \xi})\hat \alpha(\xi)=0, \quad \text{ for all } \xi\in\mathbb R^d,
\end{equation}
where $\hat \alpha$ is the Fourier transform of $\alpha$. 
Since $\alpha$ is not a polynomial, we have $\operatorname{supp} \hat \alpha$ contains a non-zero value.
This will lead to a contradiction. See Section 3.2 in~\cite{cheng2025interpolation} for more details.

\subsubsection{Details for architecture proposed for $D_n$/$C_n$ equivariant map}
\textbf{For architecture with $D_n$ symmetry}
By choosing $\Phi=(\mathcal N,\mathcal N,\cdots)$ as an invariant sequence of sparse mode, by~\Cref{prop:sparse}, we have that the transformer with the first design satisfies the $\operatorname{Aut}(\mathcal N)$-UAP. Therefore, we only need to prove thet $\operatorname{Aut}(\mathcal N)=D_n$. 

First, it is easy to see that for any $g\in D_n$ , we have $j\in \mathcal N(i)$ indicates that $g(j)\in \mathcal N(g(i))$. This implies that $N_n\in \operatorname{Aut}(\mathcal N)$. On the other hand, for any $g\in \operatorname{Aut}(\mathcal N)$, since $D_n$ is transitive, there exists $h\in D_n$ such that $h(g(1))=1$. Now, we prove that $\gamma:=h\circ g$ is either the identity or the reflection $\sigma=(1, n)(2, n-1)\cdots$.

Since $\gamma\in \operatorname{Aut}(\mathcal N)$, we know that $\gamma(\mathcal N(i))=\mathcal N(\gamma(i))$. Since $\gamma(1)=1$, we have $\gamma(\mathcal N(1))=\mathcal N(\gamma(1))=\mathcal N(1)$ is invariant. This indicates that for any $i\in\mathcal N(1)$, $\gamma(i)$ is also in $\mathcal N(1)$. Now, we consider the value of $\mathcal N(2)$. We have that 
\begin{equation}
    2w=|\mathcal N(1)\cap \mathcal N(2)|=|\gamma(\mathcal N(1))\cap \gamma(\mathcal N(2))|=|\mathcal N(1)\cap \mathcal N(\gamma(2))|
\end{equation}
Since $2w+1\le n-2$, we have that in $\mathcal N(1)$, there are only two indices $j=n,2$ such that 
\begin{equation}
    |\mathcal N(1)\cap \mathcal N(j)|=2w.
\end{equation}
Therefore, it follows that $\gamma(2)=n$ or $2$. If $\gamma(2)=2$, we can then repeat the discussion to deduce that $\gamma(i)=i$ for $i=2,3\cdots, n$ sequentially, indicating that $\gamma$ is the identity. 
If $\gamma(2)=n$. Then, we can repeat the discussion to deduce that $\gamma(i)=n+2-i \operatorname{mod} n$ for $i=2,3\cdots, n$. This indicates that $\gamma$ is a reflection $(2, n)(3,n-1)\cdots$, which is in $D_n$.

Therefore, we have shown that $\operatorname{Aut}(\mathcal N)=D_n$. This is actually a classical result on the automorphism group of the circulant  graph~\cite{biggs1993algebraic}.

Moreover, if we destroy the symmetry to reflection by defining 
\begin{equation}
    \mathcal N(i):=\{i, i+1,i+2,\cdots, i+w \operatorname{mod} n\}\text{ for } i=1,2,\cdots, n,
\end{equation}
with $w\le\lfloor \frac{n-1}{2}\rfloor-1$, we get a transformer that is $C_n$-equivariant and satisfies the $C_n$-UAP. For the proof, we only need to check that $\operatorname{Aut}(\mathcal N)=C_n $, which can be done following the same approach as $D_n$.

\textbf{For architecture with $C_n$ symmetry}
For token-mixing layer defined by the convolution in~\eqref{eq:conv}, we first notice that it satisfies the $C_n$-equivariance. In fact, this follows from the fact that the convolutional operation is equivariant to translation.

Therefore, to prove the $C_n$-UAP, we only need to check the token distinguishability condition under $C_n$ action. Specifically, suppose the condition in~\Cref{prop:analytic} fails. Then, there exists $X$ and $Y$ that are in general positions, and $i_1,i_2\in [n]$ such that the composition of token mixing layers cannot distinguish the $i_1$-th token of $X$ and the $i_2$-th token of $Y$. 

Considering using single layers, we have that for all $\psi\in\mathbb R^{l+1}$, it holds

\begin{equation}
    [\psi * X]_{i_1} = [\psi * X]_{i_2},
\end{equation}
i.e. 
\begin{equation}
    \sum_{j=0}^{l} \psi_{j}\, [X]_{(i_1 + j) \bmod n}=\sum_{j=0}^{l} \psi_{j}\, [Y]_{(i_2 + j) \bmod n},
\end{equation}
which indicates that
\begin{equation}
    \sum_{j=0}^{l} \psi_{j}\, ([X]_{(i_1 + j) \bmod n}-[Y]_{(i_2 + j) \bmod n})=0
\end{equation}
Since $\psi$ is arbitrary, this indicates that
\begin{equation}
    [X]_{(i_1 + j) \bmod n}=[Y]_{(i_2 + j) \bmod n}, \quad \text{ for } j=0,1,\cdots, l.
\end{equation}
Then, we takethe indicices ${i_1+ l\operatorname{mod} n}$ and ${i_2+ l\operatorname{mod} n}$ of $X, Y$ respectively, and consider using two layers. The process is essentially the same as the proof of \Cref{prop:sparse}. We can finally deduce that 
\begin{equation}
    [X]_{(i_1 + j) \bmod n}=[Y]_{(i_2 + j) \bmod n}, \quad \text{ for } j=0,1,\cdots, n.
\end{equation}
That is, $X$ and $Y$ differs only a cyclic action on tokens, meaning that they are from the same $C_n$ orbit, which is a contradiction, and completes the proof.

\newpage
\section*{NeurIPS Paper Checklist}

\begin{enumerate}

\item {\bf Claims}
    \item[] Question: Do the main claims made in the abstract and introduction accurately reflect the paper's contributions and scope?
    \item[] Answer: \answerYes{} 
    \item[] Justification: The claims made in the abstract and introduction accurately reflect the paper's contributions and scope. 
    \item[] Guidelines:
    \begin{itemize}
        \item The answer NA means that the abstract and introduction do not include the claims made in the paper.
        \item The abstract and/or introduction should clearly state the claims made, including the contributions made in the paper and important assumptions and limitations. A No or NA answer to this question will not be perceived well by the reviewers. 
        \item The claims made should match theoretical and experimental results, and reflect how much the results can be expected to generalize to other settings. 
        \item It is fine to include aspirational goals as motivation as long as it is clear that these goals are not attained by the paper. 
    \end{itemize}

\item {\bf Limitations}
    \item[] Question: Does the paper discuss the limitations of the work performed by the authors?
    \item[] Answer: \answerYes{} 
    \item[] Justification: We discuss the limitations of our work in the last section of the paper.
    \item[] Guidelines:
    \begin{itemize}
        \item The answer NA means that the paper has no limitation while the answer No means that the paper has limitations, but those are not discussed in the paper. 
        \item The authors are encouraged to create a separate "Limitations" section in their paper.
        \item The paper should point out any strong assumptions and how robust the results are to violations of these assumptions (e.g., independence assumptions, noiseless settings, model well-specification, asymptotic approximations only holding locally). The authors should reflect on how these assumptions might be violated in practice and what the implications would be.
        \item The authors should reflect on the scope of the claims made, e.g., if the approach was only tested on a few datasets or with a few runs. In general, empirical results often depend on implicit assumptions, which should be articulated.
        \item The authors should reflect on the factors that influence the performance of the approach. For example, a facial recognition algorithm may perform poorly when image resolution is low or images are taken in low lighting. Or a speech-to-text system might not be used reliably to provide closed captions for online lectures because it fails to handle technical jargon.
        \item The authors should discuss the computational efficiency of the proposed algorithms and how they scale with dataset size.
        \item If applicable, the authors should discuss possible limitations of their approach to address problems of privacy and fairness.
        \item While the authors might fear that complete honesty about limitations might be used by reviewers as grounds for rejection, a worse outcome might be that reviewers discover limitations that aren't acknowledged in the paper. The authors should use their best judgment and recognize that individual actions in favor of transparency play an important role in developing norms that preserve the integrity of the community. Reviewers will be specifically instructed to not penalize honesty concerning limitations.
    \end{itemize}

\item {\bf Theory assumptions and proofs}
    \item[] Question: For each theoretical result, does the paper provide the full set of assumptions and a complete (and correct) proof?
    \item[] Answer: \answerYes{} 
    \item[] Justification: We provide exact and complete assumptions and proofs for all theoretical results in the paper and the appendix.
    \item[] Guidelines:
    \begin{itemize}
        \item The answer NA means that the paper does not include theoretical results. 
        \item All the theorems, formulas, and proofs in the paper should be numbered and cross-referenced.
        \item All assumptions should be clearly stated or referenced in the statement of any theorems.
        \item The proofs can either appear in the main paper or the supplemental material, but if they appear in the supplemental material, the authors are encouraged to provide a short proof sketch to provide intuition. 
        \item Inversely, any informal proof provided in the core of the paper should be complemented by formal proofs provided in appendix or supplemental material.
        \item Theorems and Lemmas that the proof relies upon should be properly referenced. 
    \end{itemize}

    \item {\bf Experimental result reproducibility}
    \item[] Question: Does the paper fully disclose all the information needed to reproduce the main experimental results of the paper to the extent that it affects the main claims and/or conclusions of the paper (regardless of whether the code and data are provided or not)?
    \item[] Answer: \answerNA{} 
    \item[] Justification: This paper does not include experiments.
    \item[] Guidelines:
    \begin{itemize}
        \item The answer NA means that the paper does not include experiments.
        \item If the paper includes experiments, a No answer to this question will not be perceived well by the reviewers: Making the paper reproducible is important, regardless of whether the code and data are provided or not.
        \item If the contribution is a dataset and/or model, the authors should describe the steps taken to make their results reproducible or verifiable. 
        \item Depending on the contribution, reproducibility can be accomplished in various ways. For example, if the contribution is a novel architecture, describing the architecture fully might suffice, or if the contribution is a specific model and empirical evaluation, it may be necessary to either make it possible for others to replicate the model with the same dataset, or provide access to the model. In general. releasing code and data is often one good way to accomplish this, but reproducibility can also be provided via detailed instructions for how to replicate the results, access to a hosted model (e.g., in the case of a large language model), releasing of a model checkpoint, or other means that are appropriate to the research performed.
        \item While NeurIPS does not require releasing code, the conference does require all submissions to provide some reasonable avenue for reproducibility, which may depend on the nature of the contribution. For example
        \begin{enumerate}
            \item If the contribution is primarily a new algorithm, the paper should make it clear how to reproduce that algorithm.
            \item If the contribution is primarily a new model architecture, the paper should describe the architecture clearly and fully.
            \item If the contribution is a new model (e.g., a large language model), then there should either be a way to access this model for reproducing the results or a way to reproduce the model (e.g., with an open-source dataset or instructions for how to construct the dataset).
            \item We recognize that reproducibility may be tricky in some cases, in which case authors are welcome to describe the particular way they provide for reproducibility. In the case of closed-source models, it may be that access to the model is limited in some way (e.g., to registered users), but it should be possible for other researchers to have some path to reproducing or verifying the results.
        \end{enumerate}
    \end{itemize}

\item {\bf Open access to data and code}
    \item[] Question: Does the paper provide open access to the data and code, with sufficient instructions to faithfully reproduce the main experimental results, as described in supplemental material?
    \item[] Answer: \answerNA{} 
    \item[] Justification: This paper does not include experiments.
    \item[] Guidelines:
    \begin{itemize}
        \item The answer NA means that paper does not include experiments requiring code.
        \item Please see the NeurIPS code and data submission guidelines (\url{https://nips.cc/public/guides/CodeSubmissionPolicy}) for more details.
        \item While we encourage the release of code and data, we understand that this might not be possible, so “No” is an acceptable answer. Papers cannot be rejected simply for not including code, unless this is central to the contribution (e.g., for a new open-source benchmark).
        \item The instructions should contain the exact command and environment needed to run to reproduce the results. See the NeurIPS code and data submission guidelines (\url{https://nips.cc/public/guides/CodeSubmissionPolicy}) for more details.
        \item The authors should provide instructions on data access and preparation, including how to access the raw data, preprocessed data, intermediate data, and generated data, etc.
        \item The authors should provide scripts to reproduce all experimental results for the new proposed method and baselines. If only a subset of experiments are reproducible, they should state which ones are omitted from the script and why.
        \item At submission time, to preserve anonymity, the authors should release anonymized versions (if applicable).
        \item Providing as much information as possible in supplemental material (appended to the paper) is recommended, but including URLs to data and code is permitted.
    \end{itemize}

\item {\bf Experimental setting/details}
    \item[] Question: Does the paper specify all the training and test details (e.g., data splits, hyperparameters, how they were chosen, type of optimizer, etc.) necessary to understand the results?
    \item[] Answer: \answerNA{} 
    \item[] Justification: This paper does not include experiments.
    \item[] Guidelines:
    \begin{itemize}
        \item The answer NA means that the paper does not include experiments.
        \item The experimental setting should be presented in the core of the paper to a level of detail that is necessary to appreciate the results and make sense of them.
        \item The full details can be provided either with the code, in appendix, or as supplemental material.
    \end{itemize}

\item {\bf Experiment statistical significance}
    \item[] Question: Does the paper report error bars suitably and correctly defined or other appropriate information about the statistical significance of the experiments?
    \item[] Answer: \answerNA{} 
    \item[] Justification: This paper does not include experiments.
    \item[] Guidelines:
    \begin{itemize}
        \item The answer NA means that the paper does not include experiments.
        \item The authors should answer "Yes" if the results are accompanied by error bars, confidence intervals, or statistical significance tests, at least for the experiments that support the main claims of the paper.
        \item The factors of variability that the error bars are capturing should be clearly stated (for example, train/test split, initialization, random drawing of some parameter, or overall run with given experimental conditions).
        \item The method for calculating the error bars should be explained (closed form formula, call to a library function, bootstrap, etc.)
        \item The assumptions made should be given (e.g., Normally distributed errors).
        \item It should be clear whether the error bar is the standard deviation or the standard error of the mean.
        \item It is OK to report 1-sigma error bars, but one should state it. The authors should preferably report a 2-sigma error bar than state that they have a 96\% CI, if the hypothesis of Normality of errors is not verified.
        \item For asymmetric distributions, the authors should be careful not to show in tables or figures symmetric error bars that would yield results that are out of range (e.g. negative error rates).
        \item If error bars are reported in tables or plots, The authors should explain in the text how they were calculated and reference the corresponding figures or tables in the text.
    \end{itemize}

\item {\bf Experiments compute resources}
    \item[] Question: For each experiment, does the paper provide sufficient information on the computer resources (type of compute workers, memory, time of execution) needed to reproduce the experiments?
    \item[] Answer: \answerNA{} 
    \item[] Justification: This paper does not include experiments.
    \item[] Guidelines:
    \begin{itemize}
        \item The answer NA means that the paper does not include experiments.
        \item The paper should indicate the type of compute workers CPU or GPU, internal cluster, or cloud provider, including relevant memory and storage.
        \item The paper should provide the amount of compute required for each of the individual experimental runs as well as estimate the total compute. 
        \item The paper should disclose whether the full research project required more compute than the experiments reported in the paper (e.g., preliminary or failed experiments that didn't make it into the paper). 
    \end{itemize}
    
\item {\bf Code of ethics}
    \item[] Question: Does the research conducted in the paper conform, in every respect, with the NeurIPS Code of Ethics \url{https://neurips.cc/public/EthicsGuidelines}?
    \item[] Answer: \answerYes{} 
    \item[] Justification: The research conducted in this paper conforms with the NeurIPS Code of Ethics.
    \item[] Guidelines:
    \begin{itemize}
        \item The answer NA means that the authors have not reviewed the NeurIPS Code of Ethics.
        \item If the authors answer No, they should explain the special circumstances that require a deviation from the Code of Ethics.
        \item The authors should make sure to preserve anonymity (e.g., if there is a special consideration due to laws or regulations in their jurisdiction).
    \end{itemize}

\item {\bf Broader impacts}
    \item[] Question: Does the paper discuss both potential positive societal impacts and negative societal impacts of the work performed?
    \item[] Answer: \answerNA{} 
    \item[] Justification: This is a theoretical paper and does not discuss societal impacts.
    \item[] Guidelines:
    \begin{itemize}
        \item The answer NA means that there is no societal impact of the work performed.
        \item If the authors answer NA or No, they should explain why their work has no societal impact or why the paper does not address societal impact.
        \item Examples of negative societal impacts include potential malicious or unintended uses (e.g., disinformation, generating fake profiles, surveillance), fairness considerations (e.g., deployment of technologies that could make decisions that unfairly impact specific groups), privacy considerations, and security considerations.
        \item The conference expects that many papers will be foundational research and not tied to particular applications, let alone deployments. However, if there is a direct path to any negative applications, the authors should point it out. For example, it is legitimate to point out that an improvement in the quality of generative models could be used to generate deepfakes for disinformation. On the other hand, it is not needed to point out that a generic algorithm for optimizing neural networks could enable people to train models that generate Deepfakes faster.
        \item The authors should consider possible harms that could arise when the technology is being used as intended and functioning correctly, harms that could arise when the technology is being used as intended but gives incorrect results, and harms following from (intentional or unintentional) misuse of the technology.
        \item If there are negative societal impacts, the authors could also discuss possible mitigation strategies (e.g., gated release of models, providing defenses in addition to attacks, mechanisms for monitoring misuse, mechanisms to monitor how a system learns from feedback over time, improving the efficiency and accessibility of ML).
    \end{itemize}
    
\item {\bf Safeguards}
    \item[] Question: Does the paper describe safeguards that have been put in place for responsible release of data or models that have a high risk for misuse (e.g., pretrained language models, image generators, or scraped datasets)?
    \item[] Answer: \answerNA{} 
    \item[] Justification: The paper does not release any data or models that have a high risk for misuse.
    \item[] Guidelines:
    \begin{itemize}
        \item The answer NA means that the paper poses no such risks.
        \item Released models that have a high risk for misuse or dual-use should be released with necessary safeguards to allow for controlled use of the model, for example by requiring that users adhere to usage guidelines or restrictions to access the model or implementing safety filters. 
        \item Datasets that have been scraped from the Internet could pose safety risks. The authors should describe how they avoided releasing unsafe images.
        \item We recognize that providing effective safeguards is challenging, and many papers do not require this, but we encourage authors to take this into account and make a best faith effort.
    \end{itemize}

\item {\bf Licenses for existing assets}
    \item[] Question: Are the creators or original owners of assets (e.g., code, data, models), used in the paper, properly credited and are the license and terms of use explicitly mentioned and properly respected?
    \item[] Answer: \answerNA{} 
    \item[] Justification: The paper does not use existing assets.
    \item[] Guidelines:
    \begin{itemize}
        \item The answer NA means that the paper does not use existing assets.
        \item The authors should cite the original paper that produced the code package or dataset.
        \item The authors should state which version of the asset is used and, if possible, include a URL.
        \item The name of the license (e.g., CC-BY 4.0) should be included for each asset.
        \item For scraped data from a particular source (e.g., website), the copyright and terms of service of that source should be provided.
        \item If assets are released, the license, copyright information, and terms of use in the package should be provided. For popular datasets, \url{paperswithcode.com/datasets} has curated licenses for some datasets. Their licensing guide can help determine the license of a dataset.
        \item For existing datasets that are re-packaged, both the original license and the license of the derived asset (if it has changed) should be provided.
        \item If this information is not available online, the authors are encouraged to reach out to the asset's creators.
    \end{itemize}

\item {\bf New assets}
    \item[] Question: Are new assets introduced in the paper well documented and is the documentation provided alongside the assets?
    \item[] Answer: \answerNA{} 
    \item[] Justification: The paper does not release new assets.
    \item[] Guidelines:
    \begin{itemize}
        \item The answer NA means that the paper does not release new assets.
        \item Researchers should communicate the details of the dataset/code/model as part of their submissions via structured templates. This includes details about training, license, limitations, etc. 
        \item The paper should discuss whether and how consent was obtained from people whose asset is used.
        \item At submission time, remember to anonymize your assets (if applicable). You can either create an anonymized URL or include an anonymized zip file.
    \end{itemize}

\item {\bf Crowdsourcing and research with human subjects}
    \item[] Question: For crowdsourcing experiments and research with human subjects, does the paper include the full text of instructions given to participants and screenshots, if applicable, as well as details about compensation (if any)? 
    \item[] Answer: \answerNA{} 
    \item[] Justification: The paper does not involve crowdsourcing nor research with human subjects.
    \item[] Guidelines:
    \begin{itemize}
        \item The answer NA means that the paper does not involve crowdsourcing nor research with human subjects.
        \item Including this information in the supplemental material is fine, but if the main contribution of the paper involves human subjects, then as much detail as possible should be included in the main paper. 
        \item According to the NeurIPS Code of Ethics, workers involved in data collection, curation, or other labor should be paid at least the minimum wage in the country of the data collector. 
    \end{itemize}

\item {\bf Institutional review board (IRB) approvals or equivalent for research with human subjects}
    \item[] Question: Does the paper describe potential risks incurred by study participants, whether such risks were disclosed to the subjects, and whether Institutional Review Board (IRB) approvals (or an equivalent approval/review based on the requirements of your country or institution) were obtained?
    \item[] Answer: \answerNA{} 
    \item[] Justification: The paper does not involve crowdsourcing nor research with human subjects.
    \item[] Guidelines:
    \begin{itemize}
        \item The answer NA means that the paper does not involve crowdsourcing nor research with human subjects.
        \item Depending on the country in which research is conducted, IRB approval (or equivalent) may be required for any human subjects research. If you obtained IRB approval, you should clearly state this in the paper. 
        \item We recognize that the procedures for this may vary significantly between institutions and locations, and we expect authors to adhere to the NeurIPS Code of Ethics and the guidelines for their institution. 
        \item For initial submissions, do not include any information that would break anonymity (if applicable), such as the institution conducting the review.
    \end{itemize}

\item {\bf Declaration of LLM usage}
    \item[] Question: Does the paper describe the usage of LLMs if it is an important, original, or non-standard component of the core methods in this research? Note that if the LLM is used only for writing, editing, or formatting purposes and does not impact the core methodology, scientific rigorousness, or originality of the research, declaration is not required.
    \item[] Answer: \answerNA{} 
    \item[] Justification: This paper does not involve LLMs as any important, original, or non-standard components.
    \item[] Guidelines:
    \begin{itemize}
        \item The answer NA means that the core method development in this research does not involve LLMs as any important, original, or non-standard components.
        \item Please refer to our LLM policy (\url{https://neurips.cc/Conferences/2025/LLM}) for what should or should not be described.
    \end{itemize}

\end{enumerate}

\end{document}